\newtheorem{Theorem}{Theorem}
\newtheorem{Remark}{Remark}
\newtheorem{Definition}{Definition}
\newcommand{\rs}{\!\!}
\newcolumntype{C}[1]{>{\centering\arraybackslash}p{#1}}
\newcommand{\bblue}{\textcolor{black}}
\newcommand{\Bquad}{\qquad\qquad\qquad\qquad\qquad\qquad\qquad\qquad\qquad}
\newcommand{\Mquad}{\qquad\qquad\qquad\qquad\qquad}
\newcommand{\Squad}{\qquad\qquad\qquad}
\newcommand{\ind}{\emph{Industry $5.0$}}
\acrodef{fl}[{\tt FL}]{federated learning}
\acrodef{afl}[{\tt AFL}]{aerial FL}
\acrodef{ml}[ML]{machine learning}
\acrodef{sgd}[{\tt SGD}]{stochastic gradient descent}
\acrodef{cs}[CS]{central server}
\acrodef{2ceoafl}[{\tt 2CEOAFL}]{\underline{c}omputation- and \underline{c}ommunication-\underline{e}fficient \underline{o}nline \underline{a}erial \underline{f}ederated \underline{l}earning}
\acrodef{iot}[IoT] {Internet of Things}
\acrodef{mimo}[MIMO]{multiple-input and multiple-output}
\acrodef{uav}[UAV]{unmanned aerial vehicle}
\acrodef{cv}[CV]{connected vehicle}
\acrodef{acv}[ACV]{aerial connected vehicle}
\acrodef{gbs}[GBS]{ground base station}
\acrodef{bs}[BS]{base station}
\acrodef{sae}[SAE]{Society of Automotive Engineers}
\acrodef{its}[ITS]{intelligent transportation system}
\acrodef{isac}[ISAC]{integrated sensing and communication}
\acrodef{roi}[RoI]{region of interest}
\acrodef{cpu}[CPU]{central processing unit}
\acrodef{gpu}[GPU]{graphics processing unit}
\acrodef{csi}[CSI]{channel state information}
\acrodef{svd}[SVD]{singular value decomposition}
\acrodef{tdd}[TDD]{time division duplexing}
\acrodef{prb}[pRB]{physical resource block}
\acrodef{doa}[DOA]{direction of arrival}
\acrodef{dod}[DOD]{direction of departure}
\acrodef{mpc}[MPC]{multipath component}
\acrodef{fov}[FoV]{field of view}
\acrodef{aoi}[AoI]{age of information}
\acrodef{pdf}[PDF]{probability density function}
\acrodef{gmm}[GMM]{Gaussian mixture model}
\acrodef{fedavg}[FedAvg]{federated averaging}
\acrodef{cdf}[CDF]{cumulative distribution function}
\title{Computation- and Communication-Efficient Online FL for Resource-Constrained Aerial Vehicles}	  
\author{
\IEEEauthorblockN{Ferdous Pervej\IEEEauthorrefmark{1}, Richeng Jin\IEEEauthorrefmark{2}, Md Moin Uddin Chowdhury\IEEEauthorrefmark{3}, Simran Singh\IEEEauthorrefmark{4}, {\.I}smail G{\"u}ven{\c{c}}\IEEEauthorrefmark{4}, and Huaiyu Dai\IEEEauthorrefmark{4}} \\ 
\IEEEauthorblockA{\IEEEauthorrefmark{1}Department of Electrical and Computer Engineering, Utah State University, Logan, UT $84322$} \\
\IEEEauthorblockA{\IEEEauthorrefmark{2}Department of Information and Communication Engineering, Zhejiang University, Hangzhou, China 310007} \\
\IEEEauthorblockA{\IEEEauthorrefmark{3}Ericsson Research, Santa Clara, CA, USA 95054 } \\
\IEEEauthorblockA{\IEEEauthorrefmark{4}Department of Electrical and Computer Engineering, NC State University, Raleigh, NC, USA 27695 }\\
\IEEEauthorblockA{Email: {\tt ferdous.pervej@usu.edu} } 
\vspace{-0.25in}
} % <-this % stops a space
\begin{document}
\maketitle
\IEEEpeerreviewmaketitle

\begin{abstract}
Privacy-preserving distributed \ac{ml} and \ac{acv}-assisted\footnote{The term \ac{acv} is used to refer to general aerial vehicles (e.g., \ac{uav}).} edge computing have drawn significant attention lately. 
Since the onboard sensors of \acp{acv} can capture new data as they move along their trajectories, the continual arrival of such `newly' sensed data leads to online learning and demands carefully crafting the trajectories. 
Besides, as typical \acp{acv} are inherently resource-constrained, computation- and communication-efficient \ac{ml} solutions are needed.
Therefore, we propose a \ac{2ceoafl} algorithm to take the benefits of continual sensed data and limited onboard resources of the \acp{acv}. 
In particular, considering independently owned \acp{acv} act as selfish data collectors, we first model their trajectories according to their respective time-varying data distributions.
We then propose a \ac{2ceoafl} algorithm that allows the flying \acp{acv} to (a) prune the received dense \ac{ml} model to make it shallow, (b) train the pruned model, and (c) probabilistically quantize and offload their trained accumulated gradients to the \ac{cs}. 
Our extensive simulation results show that the proposed \ac{2ceoafl} algorithm delivers comparable performances to its non-pruned and non-quantized, hence, computation- and communication-inefficient counterparts.
\end{abstract}

\begin{IEEEkeywords}
Aerial federated learning, continual data sensing, gradient quantization, model pruning, online learning.
\end{IEEEkeywords}

% \vspace{-0.3in}
\acresetall 
\section{Introduction} 
\label{Intro}
% \noindent
% \textbf{Paragraph-1}:
% \begin{itemize}
%     \item Dream of industry 5.0, next generation networks will be critical
%     \item \ac{acv} assisted communication can bring enormous possibilities and take us closer to achieving industry 5.0 \cite{alsamhi2022computing}
%     \item Can do simultaneous sensing and navigation, help edge computing; in many mission-critical scenarios e.g. ITS \cite{menouar2017uav}, remote sensing \cite{zhang2024uav}, ISAC \cite{meng2024uav} etc.
% \end{itemize}
% \vspace{0.15\textheight}
\noindent
Next-generation wireless networks will play pivotal roles in \ind, which requires frequent human-robot interactions.
While \ac{acv}-assisted communications have already drawn significant attention in the current $5$G networks, the \acp{acv} are not limited to providing communication services only: they can be used for many day-to-day tasks, such as \ac{its} \cite{menouar2017uav}, remote sensing \cite{zhang2024uav}, \ac{isac} \cite{meng2024uav}, edge computing \cite{ning2023mobile}, etc., to name a few. 
As such, \acp{acv}' enormous potential in next-generation wireless networks can be leveraged to get one step closer to  \ind.% \cite{alsamhi2022computing}. 

% \noindent
% \textbf{Paragraph-2}:
% \begin{itemize}
%     \item Industry 5.0 will require swift decision making which requires fast computation
%     \item Relate with ML
%     \item Unfortunately, \ac{acv} has limited computation, storage and battery power
%     \item cannot offload due to massive data size, which causes massive communication overheads, may not compute bulky tasks timely; privacy concerns \cite{pervej2023Resource}
% \end{itemize}
% \vspace{0.15\textheight}

Fast computation is mandatory for swift decision-making in \ind~to avoid putting tasks/missions in danger.
However, typical decision-making problems are often complex, combinatorial, and non-convex.
As such, \acp{acv} and \ac{ml} need to go hand in hand with next-generation wireless networks for fast computation and ubiquitous connectivity for \ind. 
Nonetheless, since \acp{acv} are inherently resource-constrained, taking additional benefits from their onboard resources is significantly challenging. 
On the one hand, they have limited (a) battery power, which gives only a limited flight time; (b) storage, which may not allow training/storing bulky \ac{ml} models/datasets; and (c) computation power, which may not allow fast computation.
On the other hand, if powerful \acp{cpu} and/or \acp{gpu} are integrated into the \acp{acv}, computational energy expenses can be humongous, as energy expense is typically a function of the square of the \ac{cpu} cycle frequency \cite{pervej2023Resource}, which can drain their batteries drastically, truncating their flight times. 
Therefore, lightweight distributed solutions are needed to benefit from the \acp{acv}' onboard resources.

% \noindent
% \textbf{Paragraph-3}:
% \begin{itemize}
%     \item While \ac{acv} as a communication helper and/or acting as a server (see \cite{pham2022energy,fu2022federated,zhang2025latency} and the references therein) is frequently studied, benefit of using \acp{acv} as sensing tool and use of new sensed data for \ac{ml} is relatively unexplored
%     \item \acp{acv} bring enormous potential with simultaneous sensing and computing; can help future 5.0
%     \item Privacy-preserving \ac{ml} like \ac{fl} \cite{mcmahan2017communication} is needed
%     \item Unlike \ac{fl} with stationary data, the nature is online
% \end{itemize}
% \vspace{0.1\textheight}

While the use of \acp{acv} as flying \acp{bs} and/or as servers are frequently studied (see \cite{fu2022federated,zhang2025latency} and the references therein), the benefits of \acp{acv}' continual sensed data from their onboard sensors for different \ac{ml} tasks are relatively unexplored. 
Although offloading newly sensed data---forgoing data privacy concerns---may be considered an option in some instances, this comes with hefty communication overheads, overwhelming the limited resources. % of both the \acp{acv} and the underlying network. 
Besides, when the \acp{acv} are owned and controlled by different independent parties, data privacy concerns become real, necessitating privacy-preserving \ac{ml} solutions such as \ac{fl} \cite{mcmahan2017communication}. 
However, unlike traditional (offline) \ac{fl}, the problem nature is online, as data distributions change over time due to continual arrival of newly sensed data \cite{pervej2024online}. 
Therefore, \acp{acv}'s trajectories need to be modeled diligently to enable efficient data sensing considering \emph{\ac{aoi}}, and a new efficient \emph{online \ac{afl}} solution is needed to cope with the limited resources. % of the \acp{acv}. 
\bblue{Both military and general applications (e.g., surveillance, reconnaissance, object detection, electronic warfare, disaster response, search \& rescue operations, traffic control, object detection/classification, demand predictions, traffic flow control, etc.) can benefit from such online \ac{afl}.}

\subsection{State-of-the-Art Aerial Federated Learning Solutions}
\noindent
Many existing works considered \ac{acv}-assisted \ac{fl} with typical (ground) clients distributed over the \ac{roi} wirelessly connected to a flying \ac{uav}, either as the \ac{bs} or the \ac{cs} \cite{fu2022federated,zhang2025latency}.
Some recent studies \cite{zeng2020federated,chintareddy2025federated,wang2022uav} also considered \acp{uav} as flying clients.
In \cite{zeng2020federated}, an \ac{afl} algorithm was proposed considering a leader \ac{uav} acts as the \ac{cs} while some follower \acp{uav} in the same \ac{uav} swarm act as \ac{fl} clients. 
In \cite{chintareddy2025federated}, an online data generation technique is proposed to train a \ac{fl} algorithm for spectrum sensing. 
% More specifically, the \ac{fedavg} \cite{mcmahan2017communication} algorithm is used where the trained model parameters from aerial clients are weighted proportional to their wireless channel conditions during the global model aggregation phase. 
An online hierarchical \ac{afl} algorithm was proposed in \cite{wang2022uav}, in which some \acp{uav} offloaded their data to other \acp{uav} that participated in  training to get personalized models for different \ac{uav} clusters.

% \bigskip 
% \noindent
% Paragraph 1: (General ML without communication/computation efficiency)
% \begin{itemize}
    % \item Zeng \textit{et al.} proposed a UAV-swarm based FL model where the client UAVs compute and communicate with a leader UAV in \cite{zeng2020federated}.
    % \item \cite{chintareddy2025federated} used \acp{uav} for \ac{fl} for spectrum sensing
    % \item \bred{Check this one \cite{wang2022uav} }
% \end{itemize}
% \vspace{0.1\textheight}

% \noindent
% Paragraph 2 (Communication Efficient FL in UAVs):
% \begin{itemize}
    % \item \cite{jing2025air} proposed online \ac{fl}, assuming data arrives continuously under communication packet error; optimized joint client and sample selections
    % \item \cite{wu2024participant} used utility-based (uav) client sampling
    % \item \cite{hou2025split} used split-\ac{fl} using uavs as clients; the benefit of the splitFL depends on the choice of the model and may in fact lead to additional communication in cases \cite[Remark $1$]{pervej2024personalized}.
% \end{itemize}

A handful of studies also considered client scheduling and dividing training tasks to alleviate resource constraints.  
Authors in \cite{jing2025air} proposed an air-ground integrated online \ac{fl} approach, which jointly optimized new sample selection and client scheduling to minimize the training loss under energy and delay constraints. 
A similar idea is also exploited in \cite{wu2024participant}. %: joint sample and client scheduling are optimized to maximize a utility function, which is a function of the loss associated with the clients' training samples and computation speed. 
\cite{hou2025split} proposed a split \ac{fl} algorithm, where only a shallow front-end model block is trained on the \acp{uav} while the remaining (bulky) back-end model blocks are trained on an edge server.
%Although split \ac{fl} is helpful in many cases, the communication overhead largely depends on the model architecture and the choice of the \emph{cut layer} \cite{pervej2024personalized}.

\subsection{Research Gaps and Our Contributions}
\noindent
While some of the above works completely ignored the online nature of the  \ac{afl} \cite{zeng2020federated,hou2025split} and some did consider online \ac{afl} \cite{chintareddy2025federated,wang2022uav,jing2025air,wu2024participant}, these studies did not consider the fact that (a) independently owned \acp{acv} can act selfishly to maximize sensing fresh data, (b) lack of data distribution change over time modeling may hinder efficient trajectory planning, and (c) both computation- and communication-efficient \ac{afl} solution is needed.  
Therefore, we seek new solutions. 
More specifically, our key contributions are summarized below. 
\begin{enumerate}[leftmargin=+0.2in]
    \item[1)] Considering training data is spatially distributed following a \ac{gmm}, we model the class distribution changes using time-varying basis functions and a static projection matrix. % that maps the basis functions to (data) classes. 
    Then, we optimize the independently owned \acp{acv}' trajectories to maximize their collected data with fresh \ac{aoi}.
    \item[2)] We propose a \ac{2ceoafl} algorithm that enables the \acp{acv} to train pruned models, which are computationally efficient, and allow them to probabilistically quantize their trained gradients before offloading to the \ac{cs} to reduce communication overheads further. We then derive the theoretical convergence bound of the proposed algorithm to investigate how various key parameters (e.g., local data distribution change over time, model pruning, gradient quantization, etc.) add additional errors.
    \item[c)] Finally, we validate that the proposed algorithm yields comparable performance when the models are not pruned and gradients are not quantized.
\end{enumerate}

\section{System Model}
\subsection{General System Model}
\noindent
We consider a geographical \ac{roi} %, denoted by $\mathcal{F}$, 
where the \ac{fl} task needs to be performed.
In this \ac{roi}, a \ac{cs} is embedded in a \ac{gbs}. 
Without any loss of generality, let us assume the \ac{gbs} is located at the center of the coordinate system.
In this \ac{roi}, we deploy $\mathcal{U}=\{u\}_{u=0}^{U-1}$ ACVs at a fixed altitude $h_{\mathrm{acv}}$. 
During time $t$, denote the $2$D position of the $u^{\mathrm{th}}$ \ac{acv} by $\mathbf{q}_u^t \coloneqq [x_u^t, y_u^t]^T$.
The \acp{acv} travel in their respective trajectories and participate in \ac{fl} as aerial clients. 
We consider a fully synchronized setting where each global training round $t$ ends within a fixed deadline $\Delta t$. 
Besides, we assume that the communication between the \acp{acv} and the \ac{gbs} is error-free.
% In other words, the \ac{gbs} and \acp{acv} can exchange information without any errors.

\subsection{Data Distribution Model}
\noindent
In this paper, we assume that each \ac{acv} has an initial dataset $\mathcal{D}_u^{t=0}$, which contains some (outdated) data samples.
The dataset is then continually adapted as the \ac{acv} moves across its trajectory.
Suppose that each \ac{acv} uses $\mathcal{D}_u^{t=0}$ to estimate the \emph{spatial} data distribution before the training begins.
More specifically, we assume that data is spatially distributed following independent \ac{gmm} for each \ac{acv}, which has the following density function.
\begin{align}
    p \left(\mathbf{q}_u^t | \pmb{\mu}_c, \pmb{\Lambda}_u\right) = \sum\nolimits_{c=0}^{C-1} \Pi_c \mathcal{N} \left( \mathbf{q}_u^t | \pmb{\mu}_{u,c}, \pmb{\Lambda}_{u,c} \right), 
\end{align}
where $C$ is the total number of clusters/classes, $0 \leq \Pi_c \leq 1$ is the mixture weight that satisfies $\sum_{c=0}^{C-1} \Pi_c=1$, and $\mathcal{N} \left( \mathbf{q}_u^t | \pmb{\mu}_{u,c}, \pmb{\Lambda}_{u,c} \right) = \frac{1}{\left(2\pi \left|\pmb{\Lambda}_{u,c} \right|\right)^{1/2}} \exp \left[\left(\mathbf{q}_u^t - \pmb{\mu}_{u,c}\right)^T \left(\pmb{\Lambda}_{u,c}\right)^{-1} \left(\mathbf{q}_u^t - \pmb{\mu}_{u,c}\right) \right]$ is the \ac{pdf} of the multivariate Gaussian distribution with mean $\pmb{\mu}_{u,c}$ and covariance $\pmb{\Lambda}_{u,c}$.

While the \ac{gmm} above maps the class centers $\pmb{\mu}_u=\{\pmb{\mu}_{u,c}\}_{c=0}^{C-1}$ in our \ac{roi}, the \emph{temporal} distributions of the classes need to be modeled for \emph{online learning}. 
For simplicity, we consider an intuitive general strategy for modeling the temporal label distributions.
Suppose we have a time-dependent basis function $z_u (t) \in \mathbb{R}^K$, where $K>0$ is the latent dimension and a class-to-basis mapping matrix $\mathbf{M}_u \in \mathbb{R}^{C \times K}$. 
We, thus, model the temporal evolution as $\tilde{\pmb{\psi}}_u^t = \mathbf{M}_u z_u (t)$.
Then, we normalize this to get the time-varying class distribution as
\begin{align}
    \psi_{u,c}^t = \exp\left(\tilde{\psi}_{u,c}^t\right) / \left[\sum\nolimits_{c'=0}^{C-1} \exp\left(\tilde{\psi}_{u,c'}^t\right)\right].
\end{align}
% For simplicity, we assume that each \ac{acv} knows their Gaussian mixture model parameters and the time-varying class/label distributions\footnote{We assume this is known beforehand. In practice, these parameters can be estimated, which is not the main focus of this paper.}. 

\subsection{Online Training Dataset Acquisition}
\noindent
We assume that the \acp{acv} have onboard cameras\footnote{Other sensors, such as LiDar, RF sensors, radar, etc., can be used as well.} with fixed \acp{fov}.
Denote the \ac{fov} of the $u^{\mathrm{th}}$ \ac{acv} by $\theta_u$.
When the \ac{acv} is at a location $\mathbf{q}_u^{t}$, it uses the onboard camera to capture images of the ground surface.
Using the property of \emph{isosceles triangles}, we calculate this area as $\left(2 \cdot h_{\mathrm{acv}} \cdot \tan(\theta_u) \right)^2$ {\tt meter}$^2$.
Suppose the entire image captured at \ac{acv}'s location $\mathbf{q}_u^k$ is processed to create $N_u \left ( \mathbf{q}_u^t \right) \in \mathbb{Z}_0^+$ \emph{new} training samples\footnote{This shall depend on the application (e.g, an entire image can be one training sample for applications like object detection, whereas we may divide it into multiple chunks for image classification.)}.
We stress that some information may overlap between two consecutive footprints captured at locations $\mathbf{q}_u^t$ and $\mathbf{q}_u^{t+1}$.
However, depending on $\Delta t$ and data arrival densities, even if $\mathbf{q}_u^{t+1}$ is identical to $\mathbf{q}_u^{t}$, the \ac{acv} may still capture new samples with relatively fresh \ac{aoi}.

We now focus on modeling the trajectories of the \acp{acv} to facilitate the data collection process.  
It is worth noting that we are interested in modeling \acp{acv}' positions during each \ac{fl} rounds.
In other words, the \acp{acv}' continuous maneuvers using kinematics dynamics are not the key focus of this paper.

\subsubsection{Trajectory Optimization}
Let us denote the cluster association using the following indicator function:
\begin{align}
\label{clsAssocIndiFunc}
    \mathbb{I}_{u,c}^t \coloneqq 
    \begin{cases}
        1, & \text{if } \left\Vert \mathbf{q}_u^t - \pmb{\mu}_{u,c} \right\Vert \leq \zeta \lambda_{u,c} \\
        0, & \text{otherwise} \\
    \end{cases},
\end{align}
where $\lambda_{u,c}$ is the diagonal element of $\pmb{\Lambda}_{u,c}$, $0 < \zeta \leq 1$ is a hyperparameter, and $\left\Vert \cdot \right\Vert$ is the $L_2$ norm.  

Each \ac{acv} aims to travel through clusters according to the clusters' time-varying distributions $\pmb{\psi}_{u}^t \coloneqq \left\{\psi_{u,c}^t\right\}_{c=0}^{C-1}$. 
Intuitively, $\psi_{u,c}^t$ works as the class priority since the \ac{acv} has to travel and sense new training data. 
As such, each \ac{acv} wants to solve the following optimization problem independently:
\begin{subequations}
\label{trajetoryOptim4FL}
\begin{align}
    & \underset{\mathbf{q}_u^t, \mathbb{I}_{u,c}^t}{\mathrm{maximize}} \qquad \sum\nolimits_{c=0}^{C-1} \log \left[\sum\nolimits_{t=0}^{T-1} \mathbb{I}_{u,c}^t \cdot \psi_{u,c}^t + \epsilon_{\mathrm{tol}} \right] , \tag{\ref{trajetoryOptim4FL}} \\
    & \mathrm{subject~ to} \qquad \mathrm{C}_1: ~ \mathbb{I}_{u,c}^t \in \{0,1\} , \quad \forall c, t, \\
    &\qquad \mathrm{C}_2: ~ \left\Vert \mathbf{q}_u^t - \pmb{\mu}_{u,c} \right\Vert - \zeta \lambda_{u,c} \leq M \left(1 - \mathbb{I}_{u,c}^t \right), ~ \forall c,t, \\
    &\qquad \mathrm{C}_3: ~ \left\Vert \mathbf{q}_u^t - \pmb{\mu}_{u,c} \right\Vert - \zeta \lambda_{u,c} \geq -M ~ \mathbb{I}_{u,c}^t, \quad \forall c,t, \\
    & \qquad \mathrm{C}_4: ~ \sum\nolimits_{c=0}^{C-1} \mathbb{I}_{u,c}^t \leq 1, \quad \forall t, \\
    &\rs\rs \mathrm{C}_5: \sum\nolimits_{t=t_1}^{t_1+C} \mathbb{I}_{u,c}^t \leq \mathrm{I}_{u,c,\mathrm{max}}^{t_1}, ~ \forall t_1 \coloneqq t \ni [(t+1) ~\mathrm{mod}~ C = 0], \rs \rs \\
    &\qquad \mathrm{C}_6: ~ \left\Vert \mathbf{q}_{u}^{t+1} - \mathbf{q}_{u}^t \right\Vert \geq \mathrm{d_{min}}, \qquad \forall t,
\end{align}
\end{subequations}
where $\epsilon_{\mathrm{tol}} \ll 1$ is a small number added for numerical stability\bblue{, and the \emph{logarithmic} objective function ensures fair cluster associations.}
Constraints $\mathrm{C}_2$ and $\mathrm{C}_3$, where $M > \zeta \lambda_{u,c}$ is a big number, are used to replace the \emph{if-else} conditions in (\ref{clsAssocIndiFunc}) since such conditions usually are not directly implementable in existing solvers.
Intuitively, constraint $\mathrm{C}_2$ has no effect when $\mathbb{I}_{u,c}^t=0$, while constraint $\mathrm{C}_3$ becomes ineffective when $\mathbb{I}_{u,c}^t=1$. 
Besides, constraint $\mathrm{C}_4$ ensures that the \ac{acv} is associated with at max one cluster, while $\mathrm{C}_5$ restricts the \ac{acv} from visiting the same cluster for more than $\mathrm{I}_{u,c,\mathrm{max}}^{t_1}$ times within $C$ consecutive \ac{fl} rounds.
Finally, constraint $\mathrm{C}_6$ is to ensure that the consecutive trajectory points between two \ac{fl} rounds are $\mathrm{d_{min}}$ meters apart.

Unfortunately, (\ref{trajetoryOptim4FL}) is non-convex due to constraints $\mathrm{C}_3$ and $\mathrm{C}_6$, and cannot be directly solved efficiently. 
Suppose initial points $\mathbf{q}_u^{t,j}$'s are given, which let us do first-order approximations as 
\begin{align*}
    &h_1(\mathbf{q}_u^t) \coloneqq \left\Vert \mathbf{q}_u^t - \pmb{\mu}_{u,c} \right\Vert \approx \left\Vert \mathbf{q}_u^{t,j} - \pmb{\mu}_{u,c} \right\Vert + \nonumber\\
    &\qquad \rs \left[\left(\mathbf{q}_u^{t,j} - \pmb{\mu}_{u,c} \right) / \left( \left\Vert \mathbf{q}_u^{t,j} - \pmb{\mu}_{u,c} \right\Vert + \epsilon_{\mathrm{tol}} \right) \right]^T  \left[ \mathbf{q}_u^t - \mathbf{q}_u^{t,j} \right].\\
    &h_2 (\mathbf{q}_u^t, \mathbf{q}_u^{t+1}) \coloneqq \left\Vert \Delta_{\mathbf{q}}^t \right\Vert \approx \left\Vert \Delta_{\mathbf{q}}^{t,j}\right\Vert + \left[\Delta_{\mathbf{q}}^{t,j} / \left\Vert \Delta_{\mathbf{q}}^{t,j} \right\Vert \right]^T  \left[\Delta_{\mathbf{q}}^{t} - \Delta_{\mathbf{q}}^{t,j} \right],
\end{align*}
where $\Delta_{\mathbf{q}}^t \coloneqq \mathbf{q}_{u}^{t+1} - \mathbf{q}_{u}^t $ and $\Delta_{\mathbf{q}}^{t,j} \coloneqq \mathbf{q}_{u}^{t+1,j} - \mathbf{q}_{u}^{t,j} $

Thus, we transform the original problem as
\begin{subequations}
\label{trajetoryOptim4FL_Trans}
\begin{align}
    & \underset{\mathbf{q}_u^t, \mathbb{I}_{u,c}^t}{\mathrm{maximize}} \qquad \sum\nolimits_{c=0}^{C-1} \log \left[\sum\nolimits_{t=0}^{T-1} \mathbb{I}_{u,c}^t \cdot \psi_{u,c}^t + \epsilon_{\mathrm{tol}} \right] , \tag{\ref{trajetoryOptim4FL_Trans}} \\
    & \mathrm{subject~ to} \qquad \mathrm{C}_1, \mathrm{C}_2, \mathrm{C}_4, \mathrm{C}_5, \\
    &\qquad \tilde{\mathrm{C}}_3: ~ h_1 (\mathbf{q}_u^t) - \zeta \lambda_{u,c} \geq -M ~ \mathbb{I}_{u,c}^t, \quad \forall t, \\
    &\qquad \tilde{\mathrm{C}}_6: ~ h_2 (\mathbf{q}_u^t, \mathbf{q}_u^{t+1})  \geq \mathrm{d_{min}}. 
\end{align}
\end{subequations}
This transformed problem is now a mixed-integer convex programming problem, which is solvable using existing tools such as CVXPY \cite{diamond2016cvxpy} with optimizers like MOSEK. % for the given initial points $\mathbf{q}_u^{t,j}$.
We, therefore, solve the original problem approximately using an iterative process summarized in Algorithm \ref{iterTrajAlg}.

\begin{algorithm}[!t]
% \small
\fontsize{8}{8}\selectfont
\SetAlgoLined 
\DontPrintSemicolon
\KwIn{Initial points $\{\mathbf{q}_u^{t,0} \}_{t=0}^{T-1}$, total iteration $J$, precision $\varpi$}
\nl{\textbf{Repeat}: \;} 
\Indp { 
$j \gets j+1$ \;
Use $\mathbf{q}_u^{t,j-1}$ to solve (\ref{trajetoryOptim4FL_Trans})\;
Set $\mathbf{q}_{u}^{t,j} \gets \mathbf{q}_u^{t^*}$ \;
}
\Indm \textbf{Until} converge with precision $\varpi$ or $j=J$ \;
\KwOut{Optimized $\left\{\mathbf{q}_u^{t^*} \right\}_{t=0}^{T-1}$ and $\Big\{ \big\{ \mathbb{I}_{u,c}^{t^*} \big\}_{c=0}^{C-1} \Big\}_{t=0}^{T-1}$}
\caption{Iterative Trajectory Optimization}
\label{iterTrajAlg}
\end{algorithm}

\subsubsection{Continual Data Sensing for Online Learning}
\noindent
Given that the optimized $\left\{\mathbf{q}_u^{t^*} \right\}_{t=0}^{T-1}$ and $\Big\{ \big\{ \mathbb{I}_{u,c}^{t^*} \big\}_{c=0}^{C-1} \Big\}_{t=0}^{T-1}$ are known, we now focus on modeling $N_u\left(\mathbf{q}_u^{t^*}\right)$, i.e., the number of unique samples associated to location $\mathbf{q}_u^{t^*}$.
Intuitively, given that the \emph{spatial} data distribution follows \ac{gmm}, each \ac{acv} should capture more samples from a cluster if it is close to the center of that cluster.
Besides, we also need to consider the \emph{temporal} data distribution parameter $\psi_{u,c}^t$.  
As such, we use the following equation for $N_u\left(\mathbf{q}_u^{t^*}\right)$:
\begin{align} 
\label{model_Nuqt}
\rs \rs N_u \big(\mathbf{q}_u^{t^*}\big) \rs \coloneqq \rs \left\lceil \sum\nolimits_{c=0}^{C-1}  N_{\mathrm{max}} \psi_{u,c}^t \cdot \exp \big[- \big\Vert \mathbf{q}_u^{t^*} - \mu_{u,c} \big\Vert \big] \cdot \mathbb{I}_{u,c}^{t^*} \right\rceil,
\end{align}
where $\lceil \cdot \rceil$ is the ceiling operator and $N_{\mathrm{max}}$ is a hyperparameter.

Therefore, each \ac{acv} uses the following equation to update their local dataset continually as \cite{pervej2025resource}
\begin{align}
\label{datasetUpdateRule}
    \mathcal{D}_u^t \coloneqq \mathcal{D}_u^{t-1} \bigcup \left\{ \mathbf{x}_i, y_i \right\}_{i=0}^{N_u \left( \mathbf{q}_u^{t^*}\right) - 1},
\end{align}
where $\mathbf{x}_i$ and $y_i$ are the $i^{\mathrm{th}}$ feature set and corresponding label, respectively. 

\bblue{From (\ref{datasetUpdateRule}) and (\ref{model_Nuqt}), it is easy to see that, based on the optimized trajectory $\mathbf{q}_u^{t^*}$ and cluster association indicator $\mathbb{I}_u^{t^*}$, each \ac{acv} shall have their data samples proportional to their time-varying class distributions $\psi_{u,c}^t$ in each round $t$.
Therefore, the data distributions in a particular global round $t$ depend on the accumulated samples, based on $\{\psi_{u,c}^t\}_{c=0}^{C-1}$, from all previous rounds.
}

\section{Computation- and Communication-Efficient Online Aerial FL Algorithm}

\subsection{Proposed {\tt 2CEOAFL} Algorithm}
\noindent
Denote the global and local model of \ac{acv} u during global round $t$ by $\mathbf{w}^t \in \mathbb{R}^{p}$ and $\mathbf{w}_u^{t,0} \in \mathbb{R}^{p}$, respectively.
Since the \acp{acv} have limited time and resources (e.g., limited battery, computation, transmit power, etc.), we assume that they cannot train the dense model $\mathbf{w}^{t,0}$ repeatedly.
As such, the \acp{acv} first need to find their respective pruned models that are shallower and easier to train.
In this work, we assume that the \acp{acv} follow the lottery ticket hypothesis \cite{frankle2018the} to find their pruned models.
More specifically, each \ac{acv} performs $\rho$ local rounds\footnote{\bblue{Typically, $\rho$ is very small ($\approx 1$) since the \acp{acv} do not have sufficient resources to train the dense model repeatedly.}} of mini-batch \ac{sgd} using the received dense model as 
\begin{equation}
    \mathbf{w}_u^{t, \rho} = \mathbf{w}_u^{t,0} - \Tilde{\eta}^t\sum\nolimits_{\tau=0}^{\rho - 1} g_u\left(\mathbf{w}_u^{t, \tau}|\mathcal{D}_u^t\right),
\end{equation}
where $\mathbf{w}_u^{t, 0} \gets \mathbf{w}^{t}$, $\Tilde{\eta}^t$ is the \emph{local} learning rate during round $t$ and $\mathbb{E}_{\zeta \sim \mathcal{D}_{u}^t} \left[g_u \left(\mathbf{w}_u^{t, \tau}|\mathcal{D}_u^t\right)\right] \coloneqq \nabla f_u \left( \mathbf{w}_u^{t, \tau} | \mathcal{D}_u^t \right)$, where $\mathbb{E}[\cdot]$ is the expectation operator. 
After performing these $\rho$ \ac{sgd} rounds, the \ac{acv} removes $\tilde{p}$-smallest magnitude entries from $\mathbf{w}_u^{t,\rho}$ and find a corresponding binary mask $\mathbf{m}_u^{t} \in \{0,1\}^{p}$.
Then, the \ac{acv} gets its winning ticket as  $\Bar{\mathbf{w}}_u^{t, 0} \coloneqq \mathbf{w}_u^{t, 0} \odot \mathbf{m}_u^{t}$,
where $\odot$ is the element-wise multiplication operator \cite{frankle2018the}.

The \ac{acv} then performs $\kappa > \rho$ local rounds of model training using this pruned model $\Bar{\mathbf{w}}_u^{t, 0}$ to minimize the following local loss function:
\begin{align}
    f_u \left( \bar{\mathbf{w}}_u^{t,0} |\mathcal{D}_u^t \right) \coloneqq \left[1/\left|\mathcal{D}_u^t\right|\right] \sum\nolimits_{(\mathbf{x},y) \in \mathcal{D}_u^t} l \left(\bar{\mathbf{w}}_u^{t,0} |(\mathbf{x},y) \right),
\end{align}
where $l \left(\bar{\mathbf{w}}_u^{t,0} |(\mathbf{x},y) \right)$ is the loss function (e.g., MSE, cross-entropy, etc.).
Thus, the updated local model is written as
\begin{equation}
\begin{aligned}
    \Bar{\mathbf{w}}_u^{t, \kappa} 
    &= \Bar{\mathbf{w}}_u^{t, 0} - \Tilde{\eta}^t\sum\nolimits_{\tau=0}^{\kappa-1} g \left( \Bar{\mathbf{w}}_u^{t, \tau} | \mathcal{D}_u^t \right) \odot \mathbf{m}_u^{t}. 
\end{aligned}
\end{equation}
Denote the model differences $\mathbf{d}_u^t \coloneqq \big(\bar{\mathbf{w}}_u^{t,0} - \Bar{\mathbf{w}}_u^{t, \kappa}\big)/\Tilde{\eta}^t = \sum_{\tau=0}^{\kappa-1} g_u \left( \bar{\mathbf{w}}_u^{t,\tau} | \mathcal{D}_u^t \right) \odot \mathbf{m}_u^t$. 
We assume that some \acp{acv} may not be able to share their entire $\mathbf{d}_u^t$ due to large communication overheads.
As such, we consider a probabilistic model, where each \ac{acv} offloads the following 
\begin{align}
\label{modeDifUpload}
    \Pi_u^t \coloneqq 
    \begin{cases}
        \mathbf{d}_u^t, & \text{w.p.  } \mathfrak{q}_u^t, \\
        Q(\mathbf{d}_u^t), & \text{w.p. } (1-\mathfrak{q}_u^t),
    \end{cases},
\end{align}
where $\mathfrak{q}_u^t$ is the probability of transmitting the raw, i.e., unquantized, model differences and $Q(\cdot)$ is a low-precision quantizer, which is defined below.

\begin{Definition}[\textbf{Low Precision Quantizer}  \cite{alistarh2017qsgd}]
The low precision stochastic quantization operation of any vector $\mathbf{d} \in \mathbb{R}^p$ with $\mathbf{d} \neq \mathbf{0}$ is defined as
\begin{align}
    Q(\mathbf{d}) \coloneqq \Vert \mathbf{d} \Vert_2  \cdot \mathrm{sign}(d_i) \cdot \xi_i(\mathbf{d}, s), \quad i \in [p],
\end{align}
where $\xi_i(\mathbf{d}, s)$ is a random variable that takes the value of $\frac{l+1}{s}$ with probability $\frac{\vert d_i \vert}{\Vert\mathbf{d} \Vert_2}s -l$ and value of $\frac{l}{s}$ otherwise.
Besides, $s$ is the tuning parameter defining the total quantization levels and $l \in \left[0, s\right)$ is an integer such that $\frac{\vert d_i \vert}{\Vert \mathbf{d} \Vert_2} \in \left[\frac{l}{s}, \frac{l+1}{s} \right)$.    
\end{Definition}

%Therefore, the \ac{acv} offloads $\mathbf{d}_u^t$ with probability $\mathfrak{q}_u^t$ and offloads $Q(\mathbf{d}_u^t) \coloneqq Q \Big( \big(\bar{\mathbf{w}}_u^{t,0} - \Bar{\mathbf{w}}_u^{t, \kappa} \big)/\Tilde{\eta}^t \Big) = Q \left( \sum_{\tau=0}^{\kappa-1} g_u \left( \bar{\mathbf{w}}_u^{t,\tau} | \mathcal{D}_u^t \right) \odot \mathbf{m}_u^t \right)$ with probability $(1-\mathfrak{q}_u^t)$.
As such, we calculate \ac{acv}'s uplink wireless payload as
\begin{equation*}
\label{payload_Size}
    \mathrm{s}_u^{t} \left( \Pi_u^t \right) 
    \rs \leq \rs 
    \begin{cases}
        p (1 - \delta_u^t) \left(1 + 32 \right) + p, & \rs  \text{w.p. } \mathfrak{q}_u^t, \\
        p (1 - \delta_u^t) \left(1 + \lceil \log_2(s) \rceil \right) + 32 + p,\rs \rs & \rs \text{w.p. } (1-\mathfrak{q}_u^t),
    \end{cases}\rs. 
\end{equation*}
Note that for both cases, the \ac{acv} sends the binary mask.
Besides, for the first case, the \ac{acv} sends $1$-bit sign and $32$ bits unquantized model differences.
For the second case, the \ac{acv} sends the $1$-bit sign and chosen $l$ of $\xi_i(\mathbf{d},s)$, which takes at max $\lceil \log_2( s ) \rceil$ bits, and the unquantized $\Vert \mathbf{d}_u^t \Vert_2$, which takes $32$ bits.

Upon receiving the updates from the \acp{acv}, the \ac{cs} updates the global model as 
\begin{equation}
\label{aggRule}
\begin{aligned}
    \mathbf{w}^{t+1} 
    &= \mathbf{w}^t - \eta^t \sum\nolimits_{u=0}^{U-1} \alpha_u \Pi_u^t,
\end{aligned}
\end{equation}
where $\eta^t$ is the global learning rate, $0 \leq \alpha_u \leq 1$ and $\sum_{u=0}^{U-1} \alpha_u = 1$.
\bblue{Note that we use $\alpha_u=1/U$ since we assume mini-batch \ac{sgd} and all \acp{acv} randomly sample an equal number of mini-batches from their local datasets\footnote{\bblue{However, other weighting policies can be easily adopted.}}.}
The \ac{cs}, thus, minimizes the following global loss function.
\begin{align}
    f \left(\mathbf{w}^t | \mathcal{D}^t\right) \coloneqq \sum\nolimits_{u=0}^{U-1} \alpha_u f_u \left( \mathbf{w}^t | \mathcal{D}_u^t \right),
\end{align}
where $\mathcal{D}^t \coloneqq \bigcup_{u=0}^{U-1}\mathcal{D}_u^t$.

Due to continual data sensing, unlike traditional \ac{fl} with stationary datasets, \ac{2ceoafl} wants to find a sequence of global optimal models, $\mathbf{w}^{t^{*}}$, $\forall t$, that minimizes the respective round's global loss function.

\subsection{Convergence Analysis of {\tt 2CEOAFL} Algorithm}
\label{conv2ceoaflSubSection}
\noindent
We make the following standard assumptions \cite{jiang2022model,liu2022Joint, stich2018sparsified,reisizadeh2020FedPAQ,pervej2024online,pervej2024hierarchical}.

\noindent
\textbf{Assumption $\mathbf{1}$}(Smoothness).
    The local loss functions are $\beta$-Lipschitz smooth, i.e., for some $\beta > 0$, we have $\Vert \nabla f_u \left(\mathbf{w} | \mathcal{D}_u^t \right) - \nabla f_u \left(\mathbf{w}'| \mathcal{D}_u^t\right) \Vert \leq \beta \Vert \mathbf{w} - \mathbf{w}' \Vert$.

\noindent
\textbf{Assumption $\mathbf{2}$} (Unbiased gradient with bounded variance). 
    The stochastic mini-batch gradient at each \ac{acv} is unbiased, i.e., $\mathbb{E}_{\xi \sim \mathcal{D}_u^t} [g_u \left(\mathbf{w}_u | \mathcal{D}_u^t\right)] = \nabla f_u \left( \mathbf{w}_u | \mathcal{D}_u^t\right)$, for all $u$ and $\mathbf{w}_u$.
    Besides, the variance of the stochastic gradients are bounded, i.e., $\left\Vert g_u \left( \mathbf{w}_u | \mathcal{D}_u^t \right) - \nabla f_u \left( \mathbf{w}_u | \mathcal{D}_u^t \right) \right\Vert^2 \leq \sigma^2$ for some $\sigma \geq 0$, for all $u$. 

\noindent
\textbf{Assumption $\mathbf{3}$} (Bounded gradient dissimilarity).
There exist some finite constants $\rho_1 \geq 1$ and $\rho_2 \geq 0$ such that the local and global gradients have the following relationship 
\begin{align}
    \left\Vert \nabla f_u \left(\mathbf{w} |\mathcal{D}_u^t \right) \right \Vert^2 \leq  \rho_1 \left \Vert \nabla f \left(\mathbf{w} | \mathcal{D}^t \right) \right\Vert^2 + \rho_2 \epsilon_u^t,
\end{align}
where $\epsilon_u^t$ is the difference between the statistical data distributions of $\mathcal{D}^t$ and $\mathcal{D}_u^t$. 
We assume that $\rho_1=1$ and $\rho_2=0$ when all clients have identical data distributions.

\noindent
\textbf{Assumption $\mathbf{4}$}.
The stochastic quantization operation is unbiased, i.e., $\mathbb{E}_Q \left[Q (\mathbf{d}) \right] = \mathbf{d}$. 
Besides, its variance grows as 
\begin{align}
    \mathbb{E}_Q \left[ \left\Vert Q (\mathbf{d}) - \mathbf{d} \right\Vert^2 \right] \leq q \Vert \mathbf{d} \Vert^2 ,
\end{align}
where $q$ is a positive real constant.

\noindent
\textbf{Assumption $\mathbf{5}$} (Pruning ratio \cite{stich2018sparsified}).
The pruning ratio $\delta_u^t \in [0,\delta_{\mathrm{th}}]$, where $0 < \delta_{\mathrm{th}} \leq 1$, is bounded as follows:
\begin{equation}
\label{pruneRatio}
    \delta_u^t \geq \left\Vert \mathbf{w}_u^{t} - \Bar{\mathbf{w}}_u^{t,0} \right\Vert^2 / \left\Vert \mathbf{w}_u^{t} \right\Vert^2. 
\end{equation}
Moreover, $\mathbf{w}_u^{t, \tau} = \Bar{\mathbf{w}}_u^{t, \tau}$ only when $\delta_u^t = 0$.  

\begin{Definition}[Local data distribution shift \cite{pervej2024online}]
Suppose that the $u^{\mathrm{th}}$ \ac{acv} has datasets $\mathcal{D}_u^{t-1}$ and $\mathcal{D}_u^t$ during the global round $(t-1)$ and $t$, respectively. 
Then, there exist a $\Phi_u^t \geq 0$ that measures the shifts in the distributions of the \ac{acv}'s datasets between two consecutive global round as 
\begin{align}
\label{localDataShift_Def}
    \left \Vert \nabla f_u \left(\mathbf{w} | \mathcal{D}_u^{t-1}  \right) - \nabla f_u \left(\mathbf{w} | \mathcal{D}_u^t  \right) \right\Vert^2 \leq \Phi_u^t, \qquad \forall u \in \mathcal{U}, 
\end{align}
with $\Phi_u^{t=0}=0$.
\end{Definition}

Since the loss functions are typically not convex, we seek a sub-optimal convergence bound. 
We define $\frac{1}{T} \sum_{t=0}^{T-1} \mathbb{E} \left\Vert \bar{\nabla} f \left( \mathbf{w}^{t} | \mathcal{D}^t \right) \right\Vert^2 \coloneqq \pmb{\theta}$, where $\bar{\nabla} f (\cdot)  \coloneqq \sum_{u=0}^{U-1} \alpha_u \nabla f_u (\cdot) \odot \mathbf{m}_u^t$, and derive a $\pmb{\theta}$-sub-optimal convergence bound in what follows.
\begin{Theorem}
\label{conv_Theorem}
Suppose the above assumptions hold. 
When the learning rates $\eta^t \leq \frac{1}{\beta\kappa(2+q)}$ and $\Tilde{\eta}^t < \text{min} \left\{\rs\frac{1}{2\sqrt{2}\beta\kappa}, \frac{1}{2\sqrt{2\rho_1}\beta\kappa} \rs \right\}$, the global gradient is upper-bounded as
\begin{align}
    & \mathbb{E} \!\left[\!\left\Vert \bar{\nabla} f \left( \mathbf{w}^{t} | \mathcal{D}^t \right) \right\Vert^2 \! \right] \rs  
    \leq \rs \frac{2 \! \left( \mathbb{E} \left[ f \left(\mathbf{w}^{t} | \mathcal{D}^t \right) \right] - \mathbb{E} \left[ f \left( \mathbf{w}^{t+1} | \mathcal{D}^{t+1} \right) \right] \right)} {\eta^t \kappa} \nonumber\\
    &\quad + \beta \underbrace{\sigma^2}_{\mathrm{error:~SG}} \Big( \underbrace{\kappa \eta^t \sum\nolimits_{u=0}^{U-1} \alpha_u^2 \mathrm{C}_u\left(q,\mathfrak{q}_u^t \right) }_{\mathrm{error:~quantization}} + 2 \beta \kappa (\Tilde{\eta}^t)^2 \Big) + \nonumber\\
    &~~ \underbrace{16 \beta^2 \kappa^2 (\Tilde{\eta}^t)^2 \sum\nolimits_{u=0}^{U-1} \alpha_u \Phi_u^t}_{\mathrm{error: ~local~data~ dist.~shift}} + \underbrace{8 \rho_2 \beta^2 \kappa^2 (\Tilde{\eta}^t)^2 \sum\nolimits_{u=0}^{U-1} \alpha_u \epsilon_u^t}_{\mathrm{error:~gradient ~ dissimilarity}} + \nonumber\\
    &~~\underbrace{2 \beta^2 \sum\nolimits_{u=0}^{U-1} \alpha_u \delta_u^t \mathbb{E} \left[\left \Vert \mathbf{w}_u^t \right\Vert^2 \right]}_{\mathrm{error:~model~ pruning}},  \label{convgBound_Gradient_Evolution}
\end{align}
where $\mathrm{C}_u \left(q,\mathfrak{q}_u^t \right) \coloneqq 2 + 2q +(4+q)\left(\mathfrak{q}_u^t\right)^2 - (3q+4)\mathfrak{q}_u^t $. 
Moreover, averaging over time, we get the convergence bound as
\begin{align}
    &\pmb{\theta}  
    \leq \frac{2}{\kappa T} \sum\nolimits_{t=0}^{T-1} \left[ \! \left( \mathbb{E} \left[ f \left(\mathbf{w}^{t} | \mathcal{D}^t \right) \right] - \mathbb{E} \left[ f \left( \mathbf{w}^{t+1} | \mathcal{D}^{t+1} \right) \right] \right) \!/ \eta^t \right] + \nonumber\\
    & \left(\beta  \sigma^2 \rs / T \right) \rs \left( \sum\nolimits_{u=0}^{U-1} \rs \alpha_u^2 \sum\nolimits_{t=0}^{T-1} \rs \eta^t  \mathrm{C}_u\left(q, \mathfrak{q}_u^t\right) + 2 \beta \kappa \sum\nolimits_{t=0}^{T-1} (\Tilde{\eta}^t)^2 \!\right) \! + \nonumber\\
    & \frac{16 \beta^2 \kappa^2}{T} \! \sum_{t=0}^{T-1} \rs \left(\Tilde{\eta}^t\right)^2 \! \sum_{u=0}^{U-1} \rs  \alpha_u \Phi_u^t + \frac{8 \rho_2 \beta^2 \kappa^2 }{T} \sum_{t=0}^{T-1} \left(\Tilde{\eta}^t\right)^2 \sum_{u=0}^{U-1} \alpha_u \epsilon_u^t + \nonumber\\
    & \left(2 \beta^2 / T \right) \sum\nolimits_{t=0}^{T-1} \sum\nolimits_{u=0}^{U-1} \alpha_u \delta_u^t \mathbb{E} \left[\left \Vert \mathbf{w}_u^t \right\Vert^2 \right]. \label{convBoundAvgOverTime}
\end{align}
\end{Theorem}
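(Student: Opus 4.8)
The plan is to establish the per-round inequality \eqref{convgBound_Gradient_Evolution} first and then obtain \eqref{convBoundAvgOverTime} from it by a telescoping sum over $t$. The backbone is a one-step descent argument. Since $f(\cdot\,|\,\mathcal{D}^t) = \sum_u \alpha_u f_u(\cdot\,|\,\mathcal{D}_u^t)$ inherits $\beta$-smoothness from Assumption~1, I would start from the quadratic upper bound for $f(\mathbf{w}^{t+1}\,|\,\mathcal{D}^{t+1})$ expanded around $\mathbf{w}^t$, substituting $\mathbf{w}^{t+1}-\mathbf{w}^t = -\eta^t\sum_u\alpha_u\Pi_u^t$ from \eqref{aggRule}. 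The resulting linear and quadratic terms must then be controlled in expectation over three independent sources of randomness: the mini-batch sampling (Assumption~2), the stochastic quantizer (Definition~1 / Assumption~4), and the Bernoulli raw-versus-quantized draw in \eqref{modeDifUpload}.

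For the linear term, I would first take expectation over the quantizer and the Bernoulli draw, using $\mathbb{E}[\Pi_u^t]=\mathfrak{q}_u^t\mathbf{d}_u^t+(1-\mathfrak{q}_u^t)\mathbb{E}_Q[Q(\mathbf{d}_u^t)]=\mathbf{d}_u^t$, and then over SGD, using $\mathbb{E}[\mathbf{d}_u^t]=\sum_{\tau=0}^{\kappa-1}\nabla f_u(\bar{\mathbf{w}}_u^{t,\tau}\,|\,\mathcal{D}_u^t)\odot\mathbf{m}_u^t$. Replacing $\bar{\mathbf{w}}_u^{t,\tau}$ by $\bar{\mathbf{w}}_u^{t,0}=\mathbf{w}_u^t\odot\mathbf{m}_u^t$ costs a local-drift term $\mathbb{E}\Vert\bar{\mathbf{w}}_u^{t,\tau}-\bar{\mathbf{w}}_u^{t,0}\Vert^2$ of order $(\tilde\eta^t)^2\kappa^2$ times a sum of gradient-squared quantities; this is where $\sigma^2$, the dissimilarity constants $\rho_1,\rho_2\epsilon_u^t$ (Assumption~3), and the data-shift bound $\Phi_u^t$ (Definition~2) enter, and the restriction $\tilde\eta^t<\min\{\cdot\}$ is exactly what keeps this drift from dominating. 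A polarization identity $\langle a,b\rangle=\tfrac12\Vert a\Vert^2+\tfrac12\Vert b\Vert^2-\tfrac12\Vert a-b\Vert^2$ then extracts the negative term $-\tfrac{\eta^t\kappa}{2}\mathbb{E}\Vert\bar\nabla f(\mathbf{w}^t\,|\,\mathcal{D}^t)\Vert^2$ that becomes the left-hand side, while passing from gradients at $\bar{\mathbf{w}}_u^{t,0}$ to gradients at $\mathbf{w}_u^t$ produces, via $\beta$-smoothness and Assumption~5 ($\Vert\mathbf{w}_u^t-\bar{\mathbf{w}}_u^{t,0}\Vert^2\le\delta_u^t\Vert\mathbf{w}_u^t\Vert^2$), the model-pruning error $2\beta^2\sum_u\alpha_u\delta_u^t\mathbb{E}\Vert\mathbf{w}_u^t\Vert^2$.

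For the quadratic term $\tfrac{\beta(\eta^t)^2}{2}\mathbb{E}\Vert\sum_u\alpha_u\Pi_u^t\Vert^2$, I would split into diagonal and cross terms; conditioning on $\mathbf{w}^t$, independence of the quantizer/Bernoulli draws across ACVs collapses the cross terms to $\langle\mathbb{E}\mathbf{d}_u^t,\mathbb{E}\mathbf{d}_{u'}^t\rangle$, leaving $\sum_u\alpha_u^2(\mathbb{E}\Vert\Pi_u^t\Vert^2-\Vert\mathbb{E}\mathbf{d}_u^t\Vert^2)$ plus $\Vert\sum_u\alpha_u\mathbb{E}\mathbf{d}_u^t\Vert^2$. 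Carefully bounding $\mathbb{E}\Vert\Pi_u^t\Vert^2$ — combining the Bernoulli split, $\mathbb{E}_Q\Vert Q(\mathbf{d}_u^t)\Vert^2\le(1+q)\Vert\mathbf{d}_u^t\Vert^2$, and the repeated $\Vert a+b\Vert^2\le 2\Vert a\Vert^2+2\Vert b\Vert^2$ splits used to peel the gradient signal off the SGD noise — is what produces the precise coefficient $\mathrm{C}_u(q,\mathfrak{q}_u^t)=2+2q+(4+q)(\mathfrak{q}_u^t)^2-(3q+4)\mathfrak{q}_u^t$ multiplying $\sigma^2$, alongside $\mathbb{E}\Vert\nabla f(\mathbf{w}^t\,|\,\mathcal{D}^t)\Vert^2$-type residuals. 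Those residuals are absorbed into the left-hand side precisely because $\eta^t\le\frac{1}{\beta\kappa(2+q)}$; rearranging and dividing by $\eta^t\kappa/2$ yields \eqref{convgBound_Gradient_Evolution}. Summing over $t=0,\dots,T-1$, noting $\sum_t\big(\mathbb{E}[f(\mathbf{w}^t|\mathcal{D}^t)]-\mathbb{E}[f(\mathbf{w}^{t+1}|\mathcal{D}^{t+1})]\big)$ telescopes, and dividing by $T$ gives \eqref{convBoundAvgOverTime} with $\pmb\theta$ on the left.

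I expect the main obstacle to be the online, non-stationary aspect: because the objective $f(\cdot\,|\,\mathcal{D}^t)$ changes every round, the descent lemma must be arranged so the telescoping still closes, and the cross term must be massaged so that the mismatch between the data driving the local gradients ($\mathcal{D}_u^t$) and the reference objective is charged to $\Phi_u^t$ through Definition~2, yielding the $16\beta^2\kappa^2(\tilde\eta^t)^2\sum_u\alpha_u\Phi_u^t$ term; keeping the drift, the dissimilarity, and this shift simultaneously bounded is what forces the two-sided learning-rate conditions. The secondary, purely technical hurdle is the exact second-moment bookkeeping for $\Pi_u^t$ needed to land on the stated $\mathrm{C}_u(q,\mathfrak{q}_u^t)$ rather than a cruder surrogate.
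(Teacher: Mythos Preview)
Your plan is correct and mirrors the paper's proof: smoothness descent, iterated expectation over the Bernoulli/quantizer/SGD randomness for the linear ($\mathrm{T}_1$) and quadratic ($\mathrm{T}_2$) terms, polarization to isolate $-\tfrac{\eta^t\kappa}{2}\Vert\bar\nabla f(\mathbf{w}^t|\mathcal{D}^t)\Vert^2$, a self-bounding estimate on $\Vert\mathbf{w}^t-\bar{\mathbf{w}}_u^{t,\tau}\Vert^2$ ($\mathrm{T}_3$) that introduces $\Phi_u^t$, $\rho_2\epsilon_u^t$ and the pruning term via Assumption~5, and then averaging over $t$. One small clarification: the $(2+q)\kappa$-weighted gradient residuals from $\mathrm{T}_2$ live at the \emph{local} iterates $\bar{\mathbf{w}}_u^{t,\tau}$ and are absorbed not by the left-hand side but by the \emph{second} negative term $-\tfrac{\eta^t}{2}\sum_{u}\alpha_u\sum_{\tau}\Vert\bar\nabla f_u(\bar{\mathbf{w}}_u^{t,\tau}|\mathcal{D}_u^t)\Vert^2$ that polarization also produces---this is exactly what makes $\eta^t\le\tfrac{1}{\beta\kappa(2+q)}$ the correct threshold.
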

\begin{proof}
The proof is left in our online supplementary materials \cite{pervej20252ceoaflSupp} due to page limitations.
\end{proof}

The first term in (\ref{convgBound_Gradient_Evolution}) captures the change in the global loss function between two consecutive rounds, while the second term is the consequence of the mini-batch \ac{sgd} and the gradients quantization.
Besides, the third term is due to continual local data distribution shifts \bblue{due to clients' time-varying data distributions}, while the fourth term reflects the effect of bounded gradient dissimilarities between the \acp{acv} due to statistical data heterogeneity. 
Finally, the last term is the effect of model pruning.

\begin{Remark}
While diminishing learning rates $\eta^t \rightarrow 0$ and $\Tilde{\eta}^t \rightarrow 0$ as $t \rightarrow \infty$, may make the noise from the stochastic gradients, i.e., $\sigma^2$, quantization $q$, local data distribution shift $\Phi_u^t$ and bounded gradient dissimilarity, i.e., $\epsilon_u^t$, terms close to $0$, small learning rates may slow down the learning performance.
Besides, since these error terms get scaled by $\kappa$, one may choose a local rate that satisfies $\eta^t \propto \frac{1}{\kappa}$ to control these error terms. 
However, regardless of the choice of the learning rate, the error from model pruning in the last term does not vanish unless $\bar{\mathbf{w}}_u^{t,0}=\mathbf{w}_u^{t,0}$.
Therefore, the global gradient may only converge to a neighborhood of a stationary point. 
\end{Remark}

% \begin{Remark}
% The above discussions point out that the global gradient norm is largely dependent on the time-varying dataset $\mathcal{D}_u^t$ and the model pruning threshold.
% In particular, the former leads to local data distribution shifts and non-IID distributions across clients also lead to gradient dissimilarity.
% The latter, on the other hand, is controllable by choosing a smaller $\delta_u^t$, although the available (networking and computation) resources can highly influence that decision (e.g., see \cite{pervej2024hierarchical} and the references therein). 
% Due to unknown data distributions across the \ac{roi} $\mathcal{F}$ and time-varying channel states, we cannot minimize the right-hand side of (\ref{convBoundAvgOverTime}).
% \end{Remark}

\subsection{Limitations and Future Works}
\noindent
Evidently, (\ref{convBoundAvgOverTime}) gets scaled by the pruning ratios, which are not optimized. 
Besides, while our focus has been on designing a computation- and communication-efficient online \ac{fl} solution, the impacts of newly arrived samples are not explicitly modeled in this paper.
Furthermore, since the dataset is continually updated, the aggregation weights and pruning ratios---which are also impacted by the other parameters (e.g., radio resources, power allocation, user scheduling, CPU frequency, etc., see \cite{pervej2024hierarchical} and the references therein)---can be jointly optimized.
We will address these limitations in our future work.

\section{Simulation Results and Discussions}
\label{simResults}
\begin{figure*}
\begin{minipage}{0.49\textwidth}
    \centering
    \includegraphics[trim=15 3 20 10, clip, width=\textwidth, height=0.24\textheight]{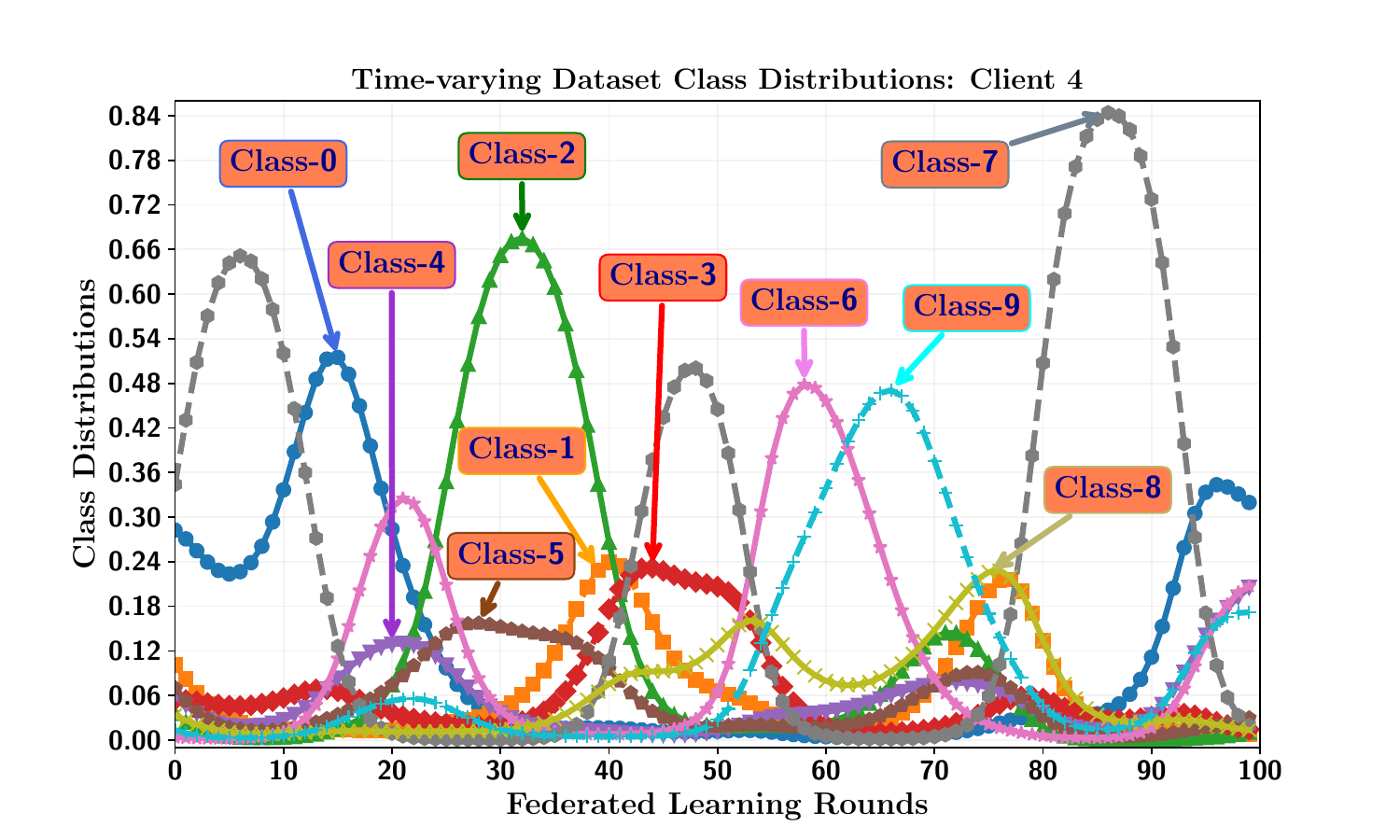}
    \caption{\Ac{acv}'s time-varying data distribution }
    \label{timeVarDataDist}
\end{minipage} \hspace{0.15in}
\begin{minipage}{0.49\textwidth}
    \centering
    \includegraphics[trim=20 5 4 10, clip, width=\textwidth, height=0.24\textheight]{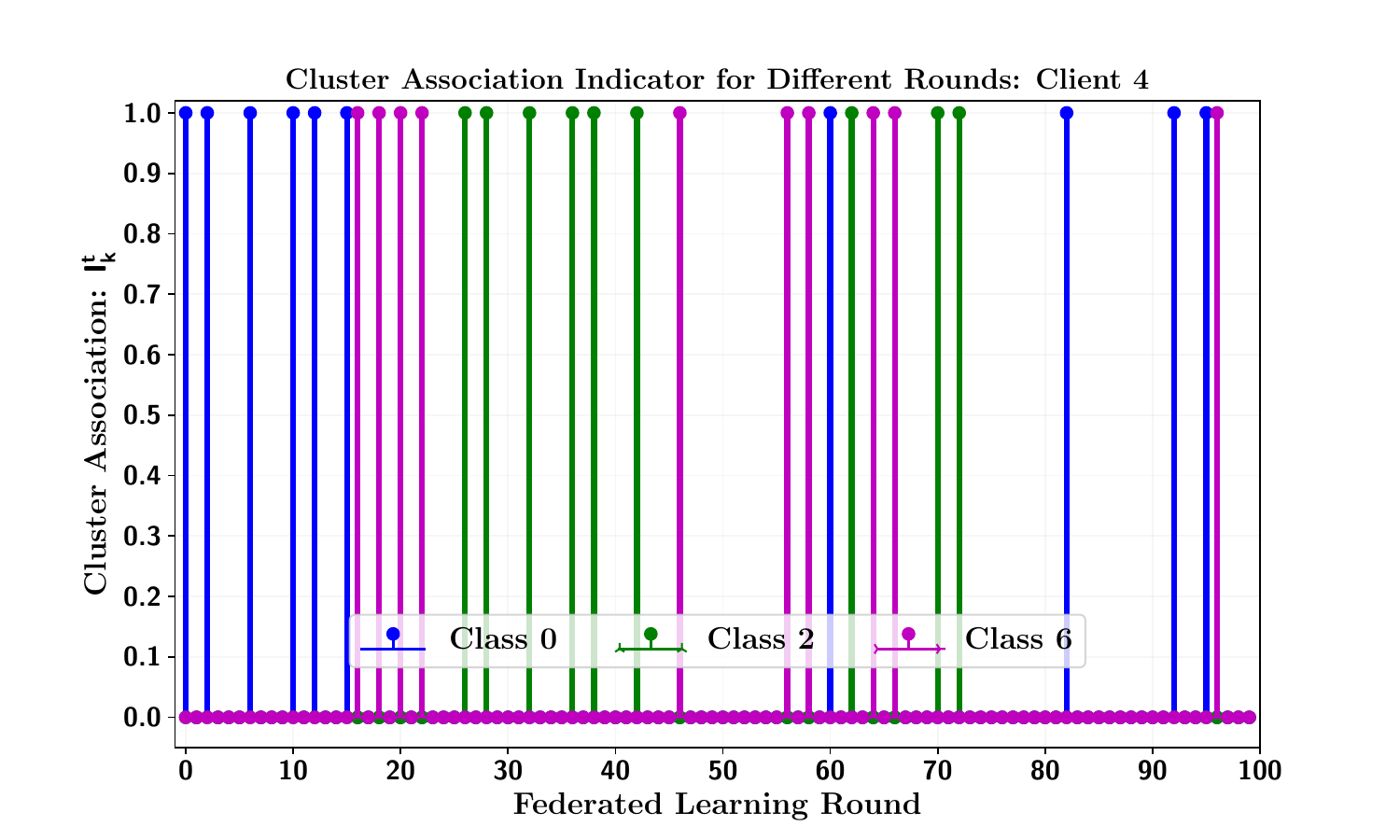}
    \caption{\Ac{acv}'s cluster association}
    \label{clusterAssocIndiResults}
\end{minipage}
\begin{minipage}{0.49\textwidth}
    \centering
    \includegraphics[trim=0 0 10 20, clip, width=\textwidth, height=0.24\textheight]{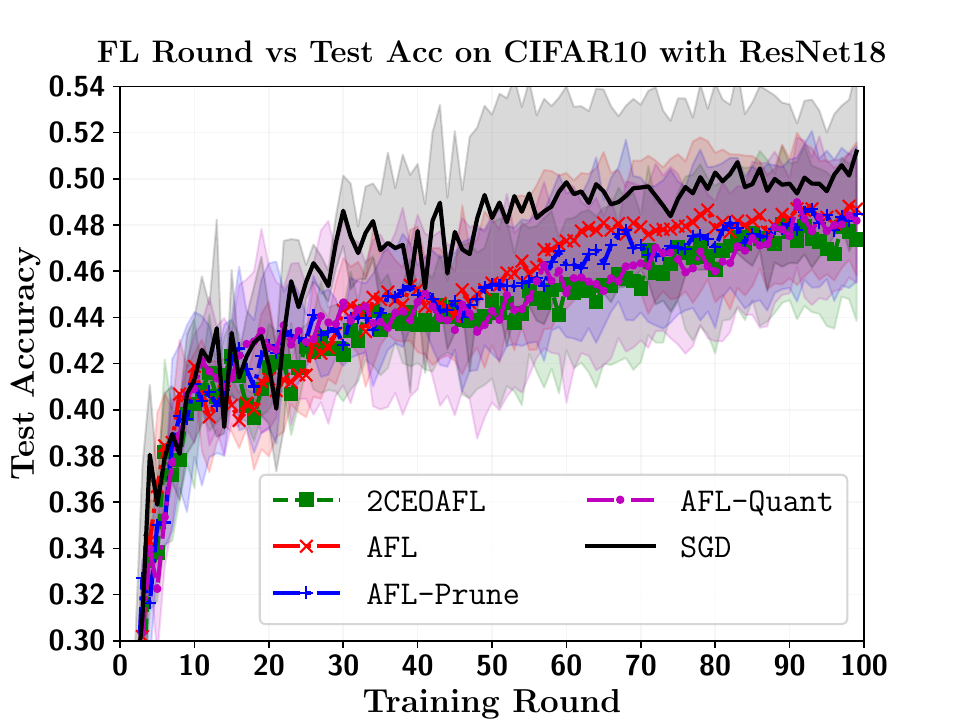}
    \caption{Test accuracies with {\tt ResNet18}: $s=3, \delta_{\mathrm{th}}=0.7$}
    \label{testAccComp}
\end{minipage} \hspace{0.15in}
\begin{minipage}{0.49\textwidth}
    \centering
    \includegraphics[trim=12 0 10 20, clip, width=\textwidth, height=0.24\textheight]{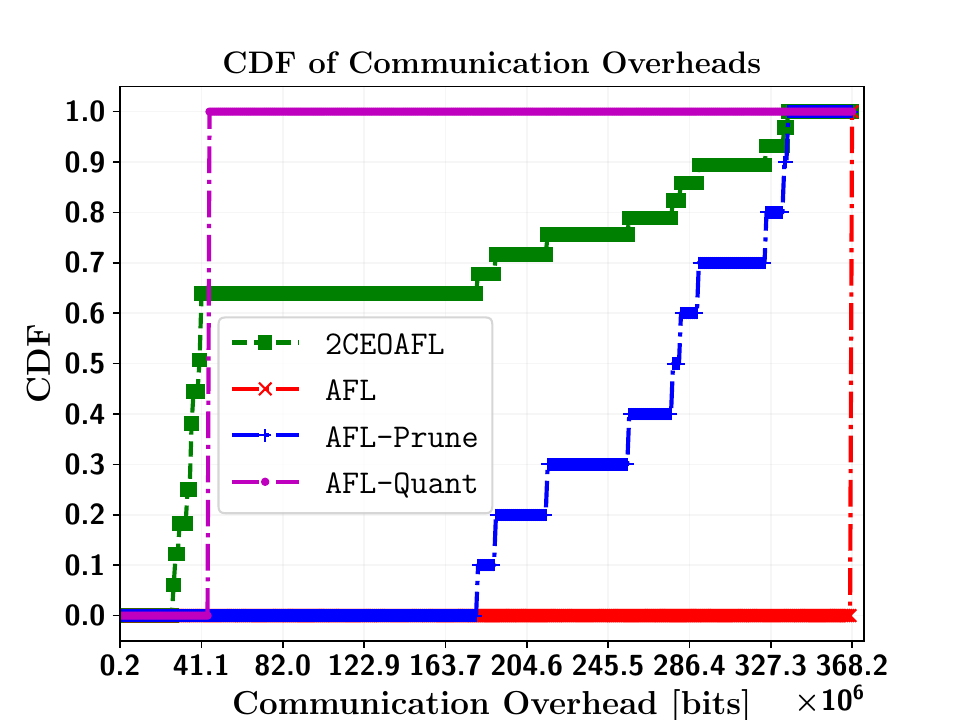}
    \caption{CDF of communication overheads: $s=3, \delta_{\mathrm{th}}=0.7$}
    \label{cdfCommOverheads}
\end{minipage}
\vspace{-0.1in}
\end{figure*}

\subsection{Simulation Setting}
\label{section_sim_set}
\noindent
For proof of concept, we consider an online image classification task using the popular {\tt CIFAR$10$} dataset, which has $C=10$ classes: each class has $10000$ training and $1000$ test samples.
We use $U=10$ \acp{acv} and $T=100$. 
Note that while hundreds of clients are typically used for performance evaluations in \ac{fl}, deploying too many \acp{acv} into a confined \ac{roi} may not be practical. 
Besides, since (readily available) existing datasets have a limited number of training samples, having more \acp{acv} means fewer distinct samples can be allocated to these \acp{acv}. 
For the time-varying basis functions, we use $\big[\sin\big(0:2\pi/T:2\pi \big), \cos\big(0:5\pi/T:5\pi\big), \sin\big(0:5\pi/T:5\pi\big), \cos\big(0:2\pi/T:2\pi\big)\big]^T$ and generate the basis-to-class mapping matrix $\mathbf{M}_u$ randomly. 
Furthermore, for simplicity, we first separate $10000 / U = 1000$ and $1000/U = 100$ unique training and test samples, respectively, to be allocated to the \acp{acv}.
Then, we prepare the initial $\mathcal{D}_u^{t=0}$ by assigning $\lceil \mathbb{E}[\psi_{u,c}^t] \times 512 \rceil$ samples from each $c$.
Moreover, we use $N_{\mathrm{max}}=420$, (\ref{model_Nuqt}) and (\ref{datasetUpdateRule}) to update the datasets in $t>0$.
We stress that the test dataset is also time-varying. 
We prepare it by concatenating $\lceil \mathbb{E}[\psi_{u,c}^t] \times 128 \rceil$ samples from each class and \ac{acv} at $t=0$, and then update it using $N_\mathrm{max}=80$ and (\ref{datasetUpdateRule}) for $t>0$.

A {\tt ResNet18} model is used as our \ac{ml} model, which is trained using \ac{sgd} optimizer with initial $\eta=0.1$ and $\tilde{\eta}=0.1$. 
The learning rates are decreased by $10\%$ in every $20$ and $50$ global rounds, respectively. 
Besides, we use $\kappa=5$, a mini-batch size of $64$, and $5$ mini-batches for local model training.
Moreover, the pruning ratio $\delta_u^t$ is randomly selected from $[0.05, 0.7]$.
Finally, we use $\mathfrak{q}_u^t = \delta_u^t$ since a small $\delta_u^t$ typically leads to large communication overheads.

\subsection{Results and Discussions}
\noindent
\subsubsection{Trajectory Model}
First, let us discuss the impacts of the time-varying class distributions and how that leads to selfish trajectory planning of the \acp{acv}.
Based on our objective function in (\ref{trajetoryOptim4FL_Trans}), intuitively, a \ac{acv} gets the most benefits if it is close to the cluster center with the highest $\psi_{u,c}^t$. 
However, since the problem is non-convex and has many constraints, the trajectory is modeled sub-optimally, maximizing the {\em logarithmic} summation of the $\psi_{u,c}^t$ to ensure fairness among clusters. 
Our simulation results in Figs. \ref{timeVarDataDist} - \ref{clusterAssocIndiResults} also reflect similar trends.
For example, class $2$ has high $\psi_{u=4,c=2}^t$ around $27^{\mathrm{th}}$ to $37^{\mathrm{th}}$ rounds, which leads the \ac{acv} to travel through cluster $2$ around those rounds.
Similar patterns are also observed for the other clusters.

\subsubsection{{\tt 2CEOALF} Results and Baseline Comparisons}
To the best of our knowledge, this is the first work of this kind for \acp{acv} and has no exact baseline. 
For fair comparisons, we consider the following baselines: (1) \ac{sgd}: a centralized baseline assuming a {\em Genie} has access to $\mathcal{D}^t$, for all $t$; (2) \ac{afl}: no pruning or quantization is performed; (3) \ac{afl}-Prune: only model pruning without any quantization; and (4) \ac{afl}-Quant: only gradient quantization without any model pruning.  

Our theoretical analysis in Section \ref{conv2ceoaflSubSection} have shown that both model pruning and quantization contribute to some additional errors to the bound in (\ref{convgBound_Gradient_Evolution}). 
As such, we expect our proposed \ac{2ceoafl} algorithm and the \ac{afl}-Prune and the \ac{afl}-Quant baselines to have slightly lower performance than the \ac{afl} baseline, which does not adopt any model pruning or quantization. 
Besides, since continual data sensing also contributes to local data distribution shifts, the optimal model parameters may also vary across different training rounds, leading to small performance fluctuations.
Moreover, the centralized \ac{sgd} is expected to perform better than the \ac{fl} algorithms since the Genie has access to all \acp{acv}' datasets.

Fig. \ref{testAccComp} also validates the above claims. 
For example, after $T=100$ round, the test accuracies are about $51.5\%$, $48.7\%$, $48.4\%$, $48.2\%$, and $47.5\%$, respectively, with \ac{sgd}, \ac{afl}, \ac{afl}-Prune, \ac{afl}-Quant, and \ac{2ceoafl} algorithms\footnote{\bblue{Note that in our considered system model, while the training data distribution continually changes, the test dataset, which is accumulated from all \acp{acv}' individual test datasets, is also time-varying as described in Section \ref{section_sim_set}. 
This is needed to ensure the trained model is tested on a dataset with a similar time-varying data distribution.}}.
We stress that while the test accuracies are very similar, our proposed solution is both computation- and communication-efficient. 
For example, suppose that one local round of \ac{afl} has $\mathrm{t_{tr}}$ time and $\mathrm{e_{tr}}$ energy overheads.
Then, the local training time overheads are \cite{pervej2024hierarchical} $\kappa \mathrm{t_{tr}}$, $\rho \mathrm{t_{tr}} + \kappa \mathrm{t_{tr}}(1-\delta_u^t)$, $\rho \mathrm{t_{tr}} + \kappa \mathrm{t_{tr}}(1-\delta_u^t)$ and $\kappa \mathrm{t_{tr}}$, while the energy overheads are $\kappa \mathrm{e_{tr}}$, $\rho \mathrm{e_{tr}} + \kappa \mathrm{e_{tr}}(1-\delta_u^t)$, $\rho \mathrm{e_{tr}} + \kappa \mathrm{e_{tr}}(1-\delta_u^t)$ and $\kappa \mathrm{e_{tr}}$, respectively, for \ac{afl}, \ac{2ceoafl}, \ac{afl}-Prune, and \ac{afl}-Quant. 
Since $\rho < \kappa $, typically, $\left[\rho \mathrm{t_{tr}} + \kappa \mathrm{t_{tr}}(1-\delta_u^t) < \kappa \mathrm{t_{tr}} \right]$ and $\rho \mathrm{e_{tr}} + \kappa \mathrm{e_{tr}}(1-\delta_u^t) < \kappa \mathrm{e_{tr}}$.
Moreover, the \ac{afl}-Quant is obviously the most communication-efficient in our setup, as it always performs  gradient quantization before offloading.
However, depending on the resources, we may not always need to quantize and \ac{afl}-Quant is not computation-efficient.
Since our proposed \ac{2ceoafl} algorithm decides whether to quantize the gradients probabilistically, it also exhibits communication efficiency. 
Fig. \ref{cdfCommOverheads} shows the \ac{cdf} of the communication overheads for these \ac{fl} algorithms. 
As we can see, \ac{afl} and \ac{afl}-Prune have the worst and the second-worst communication efficiency.

\section{Conclusions}
% \vspace{0.1 \textheight}
% \begin{itemize}
%     \item Proposed a new way to model spatial and temporal data distributions for enabling continual data sensing with \acp{acv}
%     \item Our proposed \ac{2ceoafl} delivers comparable performances while saving resources
% \end{itemize}
\noindent
This work proposed a new way to model spatial and temporal data distributions for enabling continual data sensing with \acp{acv}.
A \ac{2ceoafl} algorithm was designed, which is both computation- and communication-efficient.
The theoretical analysis showed how model pruning, gradient quantization, and data distribution shifts add errors to the learning performance.
The empirical performance validated that the proposed solution delivers comparable performance by significantly saving computation and communication resources.

\bibliography{references.bib}
\bibliographystyle{IEEEtran}

\newpage	
\begin{appendices}
\onecolumn 
\section*{Supplementary Materials}
\noindent 
\textbf{Additional notations}: $\bar{g}_u \left( \Bar{\mathbf{w}}_u^{t, \tau} | \mathcal{D}_u^t \right) \coloneqq g_u \left( \Bar{\mathbf{w}}_u^{t, \tau} | \mathcal{D}_u^t \right) \odot \mathbf{m}_u^t$, $\bar{\nabla} f_u (\cdot) \coloneqq \nabla f_u (\cdot) \odot \mathbf{m}_u^t$, $\sum_{u=0}^{U-1} \alpha_u \bar{\nabla} f_u (\cdot) \coloneqq \bar{\nabla} f (\cdot) $

\setcounter{Theorem}{0} 
\section{Proof of Theorem \ref{conv_Theorem_Supp}}
\label{appendix_Conv_Theorem}
\noindent
\begin{Theorem}
\label{conv_Theorem_Supp}
Suppose that the above assumptions hold. 
When the learning rates $\eta^t \leq \frac{1}{\beta\kappa(2+q)}$ and $\Tilde{\eta}^t < \text{min} \left\{\frac{1}{2\sqrt{2}\beta\kappa}, \frac{1}{2\sqrt{2\rho_1}\beta\kappa} \right\}$, the global gradient is upper-bounded as
\begin{align}
    \mathbb{E} \left[\left\Vert \bar{\nabla} f \left( \mathbf{w}^{t} | \mathcal{D}^t \right) \right\Vert^2 \right] 
    &\leq \Bigg[ \frac{2 \left( \mathbb{E} \left[ f \left(\mathbf{w}^{t} | \mathcal{D}^t \right) \right] - \mathbb{E} \left[ f \left( \mathbf{w}^{t+1} | \mathcal{D}^{t+1} \right) \right] \right)} {\eta^t\kappa} + \beta \sigma^2 \left( \kappa \eta^t \sum_{u=0}^{U-1} \alpha_u^2 \mathrm{C}_u\left(q,\mathfrak{q}_u^t \right) + 2 \beta \kappa (\Tilde{\eta}^t)^2 \right) + \nonumber\\
    &~ 2 \beta^2 \sum_{u=0}^{U-1} \alpha_u \mathbb{E} \left[\left \Vert \mathbf{w}_u^t - \Bar{\mathbf{w}}_u^{t, 0} \right\Vert^2 \right] + 16 \beta^2 \kappa^2 (\Tilde{\eta}^t)^2 \sum_{u=0}^{U-1} \alpha_u \Phi_u^t + 8 \beta^2 \kappa^2 (\Tilde{\eta}^t)^2 \rho_2 \sum_{u=0}^{U-1} \alpha_u \epsilon_u^t \Bigg], 
\label{convgBound_Gradient_Evolution_Supp}
\end{align}
where $\mathrm{C}_u \left(q,\mathfrak{q}_u^t \right) \coloneqq 2 + 2q +(4+q)\left(\mathfrak{q}_u^t\right)^2 - (3q+4)\mathfrak{q}_u^t $. 
Moreover, averaging over time gives the following $\pmb{\theta}$-suboptimal convergence bound.
\begin{align}
    &\frac{1}{T}\sum_{t=0}^{T-1}  \mathbb{E} \left[\left\Vert \bar{\nabla} f \left( \mathbf{w}^{t} | \mathcal{D}^t \right) \right\Vert^2 \right]  
    \leq \frac{2}{\kappa T} \sum_{t=0}^{T-1} \frac{\left( \mathbb{E} \left[ f \left(\mathbf{w}^{t} | \mathcal{D}^t \right) \right] - \mathbb{E} \left[ f \left( \mathbf{w}^{t+1} | \mathcal{D}^{t+1} \right) \right] \right)} {\eta^t} + \frac{\beta \sigma^2}{T} \left( \sum_{u=0}^{U-1} \alpha_u^2 \sum_{t=0}^{T-1} \eta^t  \mathrm{C}_u\left(q, \mathfrak{q}_u^t\right) + 2 \beta \kappa \sum_{t=0}^{T-1} (\Tilde{\eta}^t)^2 \right) + \nonumber\\
    &\Mquad \frac{16 \beta^2 \kappa^2}{T} \sum_{t=0}^{T-1} \left(\Tilde{\eta}^t\right)^2 \sum_{u=0}^{U-1} \alpha_u \Phi_u^t + \frac{8 \beta^2 \kappa^2 \rho_2}{T} \sum_{t=0}^{T-1} \left(\Tilde{\eta}^t\right)^2 \sum_{u=0}^{U-1} \alpha_u \epsilon_u^t + \frac{2 \beta^2}{T} \sum_{t=0}^{T-1} \sum_{u=0}^{U-1} \alpha_u \mathbb{E} \left[\left \Vert \mathbf{w}_u^t - \Bar{\mathbf{w}}_u^{t, 0} \right\Vert^2 \right]. \label{convBoundAvgOverTime_Supp}
\end{align}
\end{Theorem}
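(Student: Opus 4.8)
### The Approach

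\textbf{Proof strategy.} The plan is to follow the standard descent-lemma route for non-convex federated SGD, but carefully tracking five separate error sources: stochastic-gradient noise, probabilistic quantization, local data-distribution drift between rounds, gradient dissimilarity, and the pruning mismatch between $\mathbf{w}_u^{t,0}$ and $\bar{\mathbf{w}}_u^{t,0}$. First I would apply the $\beta$-smoothness Assumption 1 to the global loss $f(\cdot|\mathcal{D}^{t})$ evaluated at the aggregation step \eqref{aggRule}, which gives
\begin{align*}
  \mathbb{E}\!\left[f\!\left(\mathbf{w}^{t+1}|\mathcal{D}^{t}\right)\right]
  \leq \mathbb{E}\!\left[f\!\left(\mathbf{w}^{t}|\mathcal{D}^{t}\right)\right]
  - \eta^{t}\,\mathbb{E}\!\left\langle \nabla f(\mathbf{w}^{t}|\mathcal{D}^{t}),\ \textstyle\sum_u \alpha_u \Pi_u^{t}\right\rangle
  + \frac{\beta (\eta^{t})^{2}}{2}\,\mathbb{E}\!\left[\left\Vert \textstyle\sum_u \alpha_u \Pi_u^{t}\right\Vert^{2}\right].
\end{align*}
Because of pruning, the natural quantity that appears is not $\nabla f$ but the masked gradient $\bar\nabla f(\cdot)=\sum_u\alpha_u\nabla f_u(\cdot)\odot\mathbf{m}_u^{t}$; I would rewrite the inner-product term by adding and subtracting $\bar\nabla f$, the $\delta_u^{t}$-pruning-ratio bound of Assumption 5 absorbing the mismatch into the term $2\beta^{2}\sum_u\alpha_u\mathbb{E}\Vert\mathbf{w}_u^{t}-\bar{\mathbf{w}}_u^{t,0}\Vert^{2}$ (which is later bounded by $\delta_u^{t}\mathbb{E}\Vert\mathbf{w}_u^{t}\Vert^{2}$ via \eqref{pruneRatio}).

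\textbf{Key intermediate steps.} (i) Using Assumption 4 (unbiased quantizer) together with the probabilistic rule \eqref{modeDifUpload}, I would compute $\mathbb{E}_Q[\Pi_u^{t}]=\mathbf{d}_u^{t}$ and, more delicately, the second moment $\mathbb{E}\Vert\Pi_u^{t}-\mathbf{d}_u^{t}\Vert^{2}$, whose dependence on $\mathfrak{q}_u^{t}$ produces the quadratic coefficient $\mathrm{C}_u(q,\mathfrak{q}_u^{t})=2+2q+(4+q)(\mathfrak{q}_u^{t})^{2}-(3q+4)\mathfrak{q}_u^{t}$; this is the bookkeeping I expect to be fussiest. (ii) I would expand $\mathbf{d}_u^{t}=\sum_{\tau=0}^{\kappa-1}\bar g_u(\bar{\mathbf{w}}_u^{t,\tau}|\mathcal{D}_u^{t})$ and relate $\mathbb{E}\Vert\mathbf{d}_u^{t}\Vert^{2}$ to $\kappa^{2}\Vert\bar\nabla f_u\Vert^{2}$ plus an $O(\kappa\sigma^{2})$ variance term, using a standard bound on client drift $\mathbb{E}\Vert\bar{\mathbf{w}}_u^{t,\tau}-\bar{\mathbf{w}}_u^{t,0}\Vert^{2}$ over the $\kappa$ local steps that requires $\tilde\eta^{t}$ small — this is exactly where the stepsize restriction $\tilde\eta^{t}<\min\{1/(2\sqrt2\beta\kappa),1/(2\sqrt{2\rho_1}\beta\kappa)\}$ enters, the $\rho_1$ version coming from invoking Assumption 3 to control $\Vert\nabla f_u\Vert^{2}$ by $\rho_1\Vert\nabla f\Vert^{2}+\rho_2\epsilon_u^{t}$. (iii) The gradient is evaluated on $\mathcal{D}_u^{t}$ but I want the telescoping to be on $\mathbb{E}[f(\mathbf{w}^{t+1}|\mathcal{D}^{t+1})]$, so I would insert the distribution-shift Definition bound \eqref{localDataShift_Def}, paying the price $16\beta^{2}\kappa^{2}(\tilde\eta^{t})^{2}\sum_u\alpha_u\Phi_u^{t}$. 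Collecting everything, choosing $\eta^{t}\leq 1/(\beta\kappa(2+q))$ so that the $\frac{\beta(\eta^{t})^{2}}{2}$-term is dominated by half of the negative inner-product term, and rearranging yields \eqref{convgBound_Gradient_Evolution_Supp}; dividing by $\eta^{t}\kappa/2$, summing over $t=0,\dots,T-1$, and dividing by $T$ gives \eqref{convBoundAvgOverTime_Supp}.

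\textbf{Main obstacle.} The hard part will be the quantization second-moment computation in step (i): one must correctly handle the conditional expectation structure (the quantizer randomness conditioned on the Bernoulli draw, conditioned on the local SGD randomness), use the per-coordinate variance bound $\mathbb{E}_Q\Vert Q(\mathbf{d})-\mathbf{d}\Vert^{2}\leq q\Vert\mathbf{d}\Vert^{2}$ of Assumption 4, and combine the cross terms $2\mathfrak{q}_u^{t}(1-\mathfrak{q}_u^{t})\langle\mathbf{d}_u^{t},\mathbf{d}_u^{t}\rangle$-type contributions so that the coefficient collapses exactly to $\mathrm{C}_u(q,\mathfrak{q}_u^{t})$ — getting the sign and the quadratic-in-$\mathfrak{q}_u^{t}$ dependence right is the only genuinely non-routine algebra. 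A secondary subtlety is that the aggregated update $\sum_u\alpha_u\Pi_u^{t}$ has variance $\sum_u\alpha_u^{2}(\cdot)$ (clients are independent) rather than $\sum_u\alpha_u(\cdot)$, which is why the $\mathrm{C}_u$ term carries $\alpha_u^{2}$ while the drift/dissimilarity/pruning terms carry $\alpha_u$; I would be careful to keep these weightings distinct throughout.
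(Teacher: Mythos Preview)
Your proposal is essentially correct and follows the same descent-lemma route as the paper: smoothness at the aggregation step, splitting into an inner-product term and a squared-norm term, unbiasedness of the probabilistic quantizer to reduce the inner product to the masked true gradients, a variance/second-moment computation for $\sum_u\alpha_u\Pi_u^t$ that produces $\mathrm{C}_u(q,\mathfrak{q}_u^t)$, and a client-drift bound on $\mathbb{E}\Vert\mathbf{w}^t-\bar{\mathbf{w}}_u^{t,\tau}\Vert^2$ where the two stepsize restrictions on $\tilde\eta^t$ arise.

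One small correction on bookkeeping: in the paper the distribution-shift term $\Phi_u^t$ does \emph{not} enter through the telescoping of $\mathbb{E}[f(\mathbf{w}^{t+1}|\mathcal{D}^{t+1})]$ versus $\mathbb{E}[f(\mathbf{w}^{t+1}|\mathcal{D}^{t})]$ as you suggest in step~(iii); rather, it is injected inside the client-drift bound by writing $\bar\nabla f_u(\bar{\mathbf{w}}_u^{t,\tau}|\mathcal{D}_u^t)=\big[\bar\nabla f_u(\bar{\mathbf{w}}_u^{t,\tau}|\mathcal{D}_u^t)-\bar\nabla f_u(\bar{\mathbf{w}}_u^{t,\tau}|\mathcal{D}_u^{t-1})\big]+\cdots$ and applying \eqref{localDataShift_Def} twice (once at $\bar{\mathbf{w}}_u^{t,\tau}$ and once at $\mathbf{w}^t$), which is why the constant is $16$ rather than $8$. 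Similarly, the pruning error $\Vert\mathbf{w}_u^t-\bar{\mathbf{w}}_u^{t,0}\Vert^2$ surfaces not from the inner-product term but from the same drift bound, via the decomposition $\mathbf{w}^t-\bar{\mathbf{w}}_u^{t,\tau}=(\mathbf{w}_u^t-\bar{\mathbf{w}}_u^{t,0})+(\bar{\mathbf{w}}_u^{t,0}-\bar{\mathbf{w}}_u^{t,\tau})$. These are placement details only; your identification of the ingredients, the stepsize conditions, and the $\alpha_u^2$ versus $\alpha_u$ weighting is accurate.
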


\begin{proof}
Following our aggregation rule in (\ref{aggRule}), we can write
\begin{align}
\label{conv_eqn0}
    f\left(\mathbf{w}^{t+1} | \mathcal{D}^{t+1}\right) 
    &= f \left(\mathbf{w}^t - \eta^t\sum_{u=0}^{U-1} \alpha_u \Pi_u^t \right) \nonumber\\ 
    &\overset{(a)}{\leq} f \left( \mathbf{w}^t|\mathcal{D}^t \right) + \eta^t\bigg< \nabla f \left( \mathbf{w}^t | \mathcal{D}^t \right), -\sum_{u=0}^{U-1} \alpha_u \Pi_u^t \bigg> + \frac{\beta (\eta^t)^2} {2} \left \Vert \sum_{u=0}^{U-1} \alpha_u \Pi_u^t \right \Vert^2,
\end{align}
where $(a)$ stems from $\beta$-Lipschitz smoothness, i.e., $f(y) \leq f(x) + \left< \nabla f(x), y-x \right> + \frac{\beta}{2} \Vert y - x \Vert^2$.

Now, taking expectations on both sides of (\ref{conv_eqn0}), we get the following
\begin{align}
    \mathbb{E} \left[f \left( \mathbf{w}^{t+1} | \mathcal{D}^{t+1} \right) \right] 
    &\leq \mathbb{E} \big[f \left( \mathbf{w}^t|\mathcal{D}^t \right)\big] + \underbrace{\eta^t\mathbb{E} \bigg[\bigg< \nabla f \left( \mathbf{w}^t | \mathcal{D}^t \right), -\sum_{u=0}^{U-1} \alpha_u \Pi_u^t \bigg>\bigg]}_{\mathrm{T}_1} + \underbrace{\frac{\beta (\eta^t)^2} {2} \mathbb{E} \Bigg[ \left \Vert \sum_{u=0}^{U-1} \alpha_u \Pi_u^t \right \Vert^2 \Bigg]}_{\mathrm{T}_2},
    \label{conv_eqn0_1}
\end{align}
where the expectation depends on the randomness of the mini-batch sampling, random quantization events and stochastic quantizer $Q$.

Then, considering all randomness, we simplify the $\mathrm{T}_1$ term of (\ref{conv_eqn0_1}) as
\begin{align}
    &\mathrm{T}_1 = \eta^t\mathbb{E}_{\pmb{\zeta}^t, Q, \mathbf{\mathfrak{q}}^t} \bigg[\bigg< \nabla f \left( \mathbf{w}^t | \mathcal{D}^t \right), - \sum_{u=0}^{U-1} \alpha_u \Pi_u^t \bigg> \bigg] \nonumber\\
    &=-\eta^t\mathbb{E}_{\pmb{\zeta}^t, Q} \bigg[ \mathbb{E}_{\mathbf{\mathfrak{q}}^t |\pmb{\zeta}^t, Q} \bigg[ \bigg< \nabla f \left( \mathbf{w}^t | \mathcal{D}^t \right), \sum_{u=0}^{U-1} \alpha_u \Pi_u^t \bigg>\bigg] \bigg] \nonumber\\
    &=-\eta^t\mathbb{E}_{\pmb{\zeta}^t, Q} \bigg[ \bigg< \nabla f \left( \mathbf{w}^t | \mathcal{D}^t \right), \sum_{u=0}^{U-1} \alpha_u \mathbb{E}_{\mathbf{\mathfrak{q}}^t | \pmb{\zeta}^t, Q} \big[ \Pi_u^t \big] \bigg> \bigg] \nonumber\\
    &\overset{(a)}{=} -\eta^t\mathbb{E}_{\pmb{\zeta}^t, Q} \left[ \bigg< \nabla f \left( \mathbf{w}^t | \mathcal{D}^t \right), \sum_{u=0}^{U-1} \alpha_u \mathfrak{q}_u^t \mathbf{d}_u^t + \sum_{u=0}^{U-1} \alpha_u (1-\mathfrak{q}_u^t) \cdot Q \left( \mathbf{d}_u^t \right) \bigg> \right]\nonumber\\
    &=-\eta^t\mathbb{E}_{\pmb{\zeta}^t} \left[ \mathbb{E}_{Q|\pmb{\zeta}^t} \left[ \bigg< \nabla f \left( \mathbf{w}^t | \mathcal{D}^t \right), \sum_{u=0}^{U-1} \alpha_u \mathfrak{q}_u^t \mathbf{d}_u^t + \sum_{u=0}^{U-1} \alpha_u (1-\mathfrak{q}_u^t) \cdot Q \left( \mathbf{d}_u^t \right) \bigg> \right] \right]\nonumber\\
    &=-\eta^t\mathbb{E}_{\pmb{\zeta}^t} \left[ \bigg< \nabla f \left( \mathbf{w}^t | \mathcal{D}^t \right), \sum_{u=0}^{U-1} \alpha_u \mathfrak{q}_u^t \mathbf{d}_u^t + \sum_{u=0}^{U-1} \alpha_u (1-\mathfrak{q}_u^t) \cdot \mathbb{E}_{Q|\pmb{\zeta}^t} \left[ Q \left( \mathbf{d}_u^t \right) \right] \bigg> \right]\nonumber\\
    &\overset{(b)}{=}-\eta^t\mathbb{E}_{\pmb{\zeta}^t} \left[ \bigg< \nabla f \left( \mathbf{w}^t | \mathcal{D}^t \right), \sum_{u=0}^{U-1} \alpha_u \mathfrak{q}_u^t \mathbf{d}_u^t + \sum_{u=0}^{U-1} \alpha_u (1-\mathfrak{q}_u^t) \cdot \mathbf{d}_u^t \bigg> \right]\nonumber\\
    &= -\eta^t\mathbb{E}_{\pmb{\zeta}^t} \left[ \bigg< \nabla f \left( \mathbf{w}^t | \mathcal{D}^t \right), \sum_{u=0}^{U-1} \alpha_u \mathbf{d}_u^t \bigg> \right]\nonumber\\
    &\overset{(c)}{=} -\eta^t\mathbb{E}_{\pmb{\zeta}^t} \left[ \bigg< \nabla f \left( \mathbf{w}^t | \mathcal{D}^t \right), \sum_{u=0}^{U-1} \alpha_u \sum_{\tau=0}^{\kappa-1} g_u \left( \Bar{\mathbf{w}}_u^{t, \tau} | \mathcal{D}_u^t \right) \odot \mathbf{m}_u^t \bigg> \right] \nonumber\\
    &= -\eta^t\bigg< \nabla f \left( \mathbf{w}^t | \mathcal{D}^t \right), \sum_{u=0}^{U-1} \alpha_u \sum_{\tau=0}^{\kappa-1} \mathbb{E}_{\pmb{\zeta}^t} \left[ g_u \left( \Bar{\mathbf{w}}_u^{t, \tau} | \mathcal{D}_u^t \right) \odot \mathbf{m}_u^t \right] \bigg>  \nonumber\\
    &\overset{(d)}{=} -\eta^t\left< \sum_{u=0}^{U-1} \alpha_u \nabla f_u \left(\mathbf{w}^t | \mathcal{D}_u^t \right), \sum_{u=0}^{U-1} \alpha_u \sum_{\tau=0}^{\kappa-1} \nabla f_u \left( \Bar{\mathbf{w}}_u^{t, \tau} | \mathcal{D}_u^t \right) \odot \mathbf{m}_u^t \right> \nonumber\\
    &= -\eta^t\sum_{u=0}^{U-1} \alpha_u \left< \nabla f_u \left(\mathbf{w}^t | \mathcal{D}_u^t \right) \odot \mathbf{m}_u^t, \sum_{\tau=0}^{\kappa-1} \nabla f_u \left( \Bar{\mathbf{w}}_u^{t, \tau} | \mathcal{D}_u^t \right) \odot \mathbf{m}_u^t \right> \nonumber\\
    &= -\eta^t\sum_{u=0}^{U-1} \alpha_u \left< \bar{\nabla} f_u \left( \mathbf{w}^t | \mathcal{D}^t \right), \sum_{\tau=0}^{\kappa-1} \bar{\nabla} f_u \left( \Bar{\mathbf{w}}_u^{t, \tau} | \mathcal{D}_u^t \right) \right> \nonumber\\
    &=-\eta^t\sum_{u=0}^{U-1} \alpha_u \sum_{\tau=0}^{\kappa-1} \left< \bar{\nabla} f_u \left( \mathbf{w}^t | \mathcal{D}^t \right), \bar{\nabla} f_u \left( \Bar{\mathbf{w}}_u^{t, \tau} | \mathcal{D}_u^t \right) \right> \nonumber\\
    &\overset{(e)}{=} -\frac{\eta^t}{2} \sum_{u=0}^{U-1} \alpha_u \sum_{\tau=0}^{\kappa-1} \left[ \left\Vert \bar{\nabla} f_u \left( \mathbf{w}^t | \mathcal{D}_u^t \right) \right\Vert^2 + \left\Vert \bar{\nabla} f_u \left( \Bar{\mathbf{w}}_u^{t, \tau} | \mathcal{D}_u^t \right) \right\Vert^2 - \left\Vert \bar{\nabla} f_u \left( \mathbf{w}^t | \mathcal{D}_u^t \right) - \bar{\nabla} f_u \left( \Bar{\mathbf{w}}_u^{t, \tau} | \mathcal{D}_u^t \right) \right\Vert^2 \right] \nonumber\\
    &= \frac{\eta^t}{2} \sum_{u=0}^{U-1} \alpha_u \sum_{\tau=0}^{\kappa-1} \left\Vert \bar{\nabla} f_u \left( \mathbf{w}^t | \mathcal{D}_u^t \right) - \bar{\nabla} f_u \left( \Bar{\mathbf{w}}_u^{t, \tau} | \mathcal{D}_u^t \right) \right\Vert^2 - \frac{\eta^t}{2} \sum_{u=0}^{U-1} \alpha_u \sum_{\tau=0}^{\kappa-1} \left\Vert \bar{\nabla} f_u \left( \mathbf{w}^t | \mathcal{D}_u^t \right) \right\Vert^2 - \frac{\eta^t}{2} \sum_{\tau=0}^{\kappa-1} \sum_{u=0}^{U-1} \alpha_u \left\Vert \bar{\nabla} f_u \left( \Bar{\mathbf{w}}_u^{t, \tau} | \mathcal{D}_u^t \right) \right\Vert^2 \nonumber\\
    &\overset{(f)}{\leq} \frac{\eta^t \beta^2}{2} \sum_{u=0}^{U-1} \alpha_u \sum_{\tau=0}^{\kappa-1} \left\Vert \mathbf{w}^t - \Bar{\mathbf{w}}_u^{t, \tau} \right\Vert^2 - \frac{\eta^t\kappa}{2} \left\Vert \sum_{u=0}^{U-1} \alpha_u \bar{\nabla} f_u \left( \mathbf{w}^t | \mathcal{D}_u^t \right) \right\Vert^2 - \frac{\eta^t}{2} \sum_{u=0}^{U-1} \alpha_u \sum_{\tau=0}^{\kappa-1} \left\Vert \bar{\nabla} f_u \left( \Bar{\mathbf{w}}_u^{t, \tau} | \mathcal{D}_u^t \right) \right\Vert^2 \nonumber\\
    &\overset{(g)}{\leq} \frac{\eta^t \beta^2}{2} \sum_{u=0}^{U-1} \alpha_u \sum_{\tau=0}^{\kappa-1} \left\Vert \mathbf{w}^t - \Bar{\mathbf{w}}_u^{t, \tau} \right\Vert^2 - \frac{\eta^t\kappa}{2} \left\Vert \bar{\nabla} f \left( \mathbf{w}^t|\mathcal{D}^t \right) \right\Vert^2 - \frac{\eta^t}{2} \sum_{u=0}^{U-1} \alpha_u \sum_{\tau=0}^{\kappa-1} \left\Vert \bar{\nabla} f_u \left( \Bar{\mathbf{w}}_u^{t, \tau} | \mathcal{D}_u^t \right) \right\Vert^2, \label{second_term_sim}
\end{align}
where $(a)$ stems from the two cases for the \ac{acv}'s model difference upload, which is defined in (\ref{modeDifUpload}).
Besides, $(b)$ comes from the unbiased quantizer assumption, while $(c)$ is obtained using the definition of $\mathbf{d}_u^t$. 
$(d)$ stems from the unbiased stochastic gradient assumption.
Furthermore, $(e)$ arises from the fact that $2\left< \mathbf{x}, \mathbf{y} \right> = \Vert \mathbf{x} \Vert^2 + \Vert \mathbf{y} \Vert^2 - \Vert \mathbf{x} - \mathbf{y} \Vert^2 $ for a real vector space.
Moreover, $(f)$ stems due to the convexity of vector norm and Jensen inequality, i.e., $\Vert \sum_{i=1}^I \alpha_i \mathbf{x}_i\Vert^2 \leq \sum_{i=1}^I \alpha_i \Vert \mathbf{x}_i \Vert^2$ and $\beta$-Lipschitz smoothness property. 
In $(g)$, we use the fact $\bar{\nabla} f \left( \mathbf{w}^t|\mathcal{D}^t \right) \coloneqq \sum_{u=0}^{U-1} \alpha_u \bar{\nabla} f_u \left( \mathbf{w}^t | \mathcal{D}^t \right)$ since $f (\mathbf{w}) \coloneqq \sum_{u=0}^{U-1} f_u (\mathbf{w})$.

Now, we approximate the $\mathrm{T}_2$ term in (\ref{conv_eqn0_1}) considering all randomness as
\begin{align}
    &\mathrm{T}_2 = \frac{\beta (\eta^t)^2} {2} \mathbb{E}_{\pmb{\zeta}^t,Q,\mathbf{\mathfrak{q}}^t} \left[ \left \Vert \sum_{u=0}^{U-1} \alpha_u \Pi_u^t \right \Vert^2 \right]  \nonumber\\
    &=\frac{\beta (\eta^t)^2} {2} \mathbb{E}_{\pmb{\zeta}^t,Q} \left[ \mathbb{E}_{\mathbf{\mathfrak{q}}^t|\pmb{\zeta}^t,Q} \left[ \left \Vert \sum_{u=0}^{U-1} \alpha_u \Pi_u^t \right \Vert^2 \right] \right]  \nonumber\\
    &\overset{(a)}{=}\frac{\beta (\eta^t)^2} {2} \mathbb{E}_{\pmb{\zeta}^t,Q} \left[ \mathbb{E}_{\mathbf{\mathfrak{q}}^t|\pmb{\zeta}^t,Q} \left[ \left \Vert \sum_{u=0}^{U-1} \alpha_u \Pi_u^t - \mathbb{E}_{\mathbf{\mathfrak{q}}^t|\pmb{\zeta}^t,Q} \left[ \sum_{u=0}^{U-1} \alpha_u \Pi_u^t \right] \right \Vert^2 \right] + \left(\mathbb{E}_{\mathbf{\mathfrak{q}}^t|\pmb{\zeta}^t,Q} \left[ \sum_{u=0}^{U-1} \alpha_u \Pi_u^t \right] \right)^2 \right]  \nonumber\\
    &=\frac{\beta (\eta^t)^2} {2} \mathbb{E}_{\pmb{\zeta}^t,Q} \left[ \mathbb{E}_{\mathbf{\mathfrak{q}}^t|\pmb{\zeta}^t,Q} \left[ \left \Vert \sum_{u=0}^{U-1} \alpha_u \left(\Pi_u^t -  \mathbb{E}_{\mathbf{\mathfrak{q}}^t|\pmb{\zeta}^t,Q} \left[\Pi_u^t \right] \right) \right \Vert^2 \right] + \left\Vert \sum_{u=0}^{U-1} \alpha_u \mathbb{E}_{\mathbf{\mathfrak{q}}^t|\pmb{\zeta}^t,Q} \left[\Pi_u^t \right] \right\Vert^2 \right]  \nonumber\\
    &=\frac{\beta (\eta^t)^2} {2} \mathbb{E}_{\pmb{\zeta}^t,Q} \Bigg[ \mathbb{E}_{\mathbf{\mathfrak{q}}^t | \pmb{\zeta}^t,Q} \left[\sum_{u=0}^{U-1} \alpha_u^2  \left \Vert \Pi_u^t -  \mathbb{E}_{\mathbf{\mathfrak{q}}^t | \pmb{\zeta}^t,Q} \left[\Pi_u^t \right] \right \Vert^2 + \sum_{u=0}^{U-1} \alpha_u \left(\Pi_u^t -  \mathbb{E}_{\mathbf{\mathfrak{q}}^t | \pmb{\zeta}^t,Q} \left[\Pi_u^t \right] \right) \times \sum_{u'=0, u'\neq u}^{U-1} \alpha_{u'} \left(\Pi_{u'}^t -  \mathbb{E}_{\mathbf{\mathfrak{q}}^t | \pmb{\zeta}^t,Q} \left[\Pi_{u'}^t \right] \right) \right] + \nonumber\\
    &\Squad \left\Vert \sum_{u=0}^{U-1} \alpha_u \mathbb{E}_{\mathbf{\mathfrak{q}}^t | \pmb{\zeta}^t,Q} \left[\Pi_u^t \right] \right\Vert^2 \Bigg]  \nonumber\\
    &\overset{(b)}{=}\frac{\beta (\eta^t)^2} {2} \mathbb{E}_{\pmb{\zeta}^t,Q} \left[ \sum_{u=0}^{U-1} \alpha_u^2 \mathbb{E}_{\mathbf{\mathfrak{q}}^t | \pmb{\zeta}^t,Q} \left[ \left \Vert \Pi_u^t -  \mathbb{E}_{\mathbf{\mathfrak{q}}^t | \pmb{\zeta}^t,Q} \left[\Pi_u^t \right] \right \Vert^2  \right] + \left\Vert \sum_{u=0}^{U-1} \alpha_u \mathbb{E}_{\mathbf{\mathfrak{q}}^t | \pmb{\zeta}^t,Q} \left[\Pi_u^t \right] \right\Vert^2 \right]  \nonumber\\
    &= \frac{\beta (\eta^t)^2} {2} \mathbb{E}_{\pmb{\zeta}^t,Q} \left[ \sum_{u=0}^{U-1} \alpha_u^2 \mathbb{E}_{\mathbf{\mathfrak{q}}^t | \pmb{\zeta}^t,Q} \left[ \left \Vert \Pi_u^t -  \left[\mathfrak{q}_u^t \mathbf{d}_u^t + (1 - \mathfrak{q}_u^t) Q\left( \mathbf{d}_u^t \right) \right] \right \Vert^2  \right] + \left\Vert \sum_{u=0}^{U-1} \alpha_u  \left[\mathfrak{q}_u^t \mathbf{d}_u^t + (1 - \mathfrak{q}_u^t) Q\left( \mathbf{d}_u^t \right) \right] \right\Vert^2 \right]  \nonumber\\
    &= \frac{\beta (\eta^t)^2} {2} \mathbb{E}_{\pmb{\zeta}^t,Q} \Bigg[ \sum_{u=0}^{U-1} \alpha_u^2 \bigg\{ \mathfrak{q}_u^t \left \Vert \mathbf{d}_u^t -  \left[\mathfrak{q}_u^t \mathbf{d}_u^t + (1 - \mathfrak{q}_u^t) Q\left( \mathbf{d}_u^t \right) \right] \right \Vert^2 + (1-\mathfrak{q}_u^t) \left \Vert Q\left(\mathbf{q}_u^t\right) -  \left[\mathfrak{q}_u^t \mathbf{d}_u^t + (1 - \mathfrak{q}_u^t) Q\left( \mathbf{d}_u^t \right) \right \Vert^2  \right] \bigg\} + \nonumber\\
    &\Squad \left\Vert \sum_{u=0}^{U-1} \alpha_u  \mathfrak{q}_u^t \mathbf{d}_u^t + \sum_{u=0}^{U-1} \alpha_u (1 - \mathfrak{q}_u^t) Q\left( \mathbf{d}_u^t \right)  \right\Vert^2 \Bigg] \nonumber\\
    &\overset{(c)}{\leq} \frac{\beta (\eta^t)^2} {2} \mathbb{E}_{\pmb{\zeta}^t,Q} \Bigg[ \sum_{u=0}^{U-1} \alpha_u^2 \bigg\{ \mathfrak{q}_u^t (1-\mathfrak{q}_u^t) \left \Vert \mathbf{d}_u^t -  Q \left( \mathbf{d}_u^t \right) \right \Vert^2  \bigg\} + 2\left\Vert \sum_{u=0}^{U-1} \alpha_u \mathfrak{q}_u^t \mathbf{d}_u^t \right\Vert^2 + 2 \left\Vert \sum_{u=0}^{U-1} \alpha_u (1 - \mathfrak{q}_u^t) Q\left( \mathbf{d}_u^t \right) \right\Vert^2 \Bigg] \nonumber\\
    &=\frac{\beta (\eta^t)^2} {2} \mathbb{E}_{\pmb{\zeta}^t} \left[ \mathbb{E}_{Q|\pmb{\zeta}^t} \left[ \sum_{u=0}^{U-1} \alpha_u^2 \bigg\{ \mathfrak{q}_u^t (1-\mathfrak{q}_u^t) \left \Vert \mathbf{d}_u^t -  Q \left( \mathbf{d}_u^t \right) \right \Vert^2  \bigg\} + 2\left\Vert \sum_{u=0}^{U-1} \alpha_u \mathfrak{q}_u^t \mathbf{d}_u^t \right\Vert^2 + 2 \left\Vert \sum_{u=0}^{U-1} \alpha_u (1 - \mathfrak{q}_u^t) Q\left( \mathbf{d}_u^t \right) \right\Vert^2 \right] \right] \nonumber\\
    &=\frac{\beta (\eta^t)^2} {2} \mathbb{E}_{\pmb{\zeta}^t} \left[  \sum_{u=0}^{U-1} \alpha_u^2 \mathfrak{q}_u^t (1-\mathfrak{q}_u^t) \mathbb{E}_{Q|\pmb{\zeta}^t} \left[\left \Vert \mathbf{d}_u^t -  Q \left( \mathbf{d}_u^t \right) \right \Vert^2  \right] + 2\left\Vert \sum_{u=0}^{U-1} \alpha_u \mathfrak{q}_u^t \mathbf{d}_u^t \right\Vert^2 + 2 \mathbb{E}_{Q|\pmb{\zeta}^t} \left[ \left\Vert \sum_{u=0}^{U-1} \alpha_u (1 - \mathfrak{q}_u^t) Q\left( \mathbf{d}_u^t \right) \right\Vert^2 \right] \right] \nonumber\\
    &\overset{(d)}{\leq} \frac{\beta (\eta^t)^2} {2} \mathbb{E}_{\pmb{\zeta}^t} \left[  \sum_{u=0}^{U-1} \alpha_u^2 \mathfrak{q}_u^t (1-\mathfrak{q}_u^t) q \left \Vert \mathbf{d}_u^t \right \Vert^2 + 2\left\Vert \sum_{u=0}^{U-1} \alpha_u \mathfrak{q}_u^t \mathbf{d}_u^t \right\Vert^2 + 2 \mathbb{E}_{Q|\pmb{\zeta}^t} \left[ \left\Vert \sum_{u=0}^{U-1} \alpha_u (1 - \mathfrak{q}_u^t) Q\left( \mathbf{d}_u^t \right) \right\Vert^2 \right] \right] \nonumber\\
    &= \frac{\beta (\eta^t)^2} {2} \Bigg[ q \sum_{u=0}^{U-1} \alpha_u^2 \mathfrak{q}_u^t (1-\mathfrak{q}_u^t) \left( \mathbb{E}_{\pmb{\zeta}^t} \left[ \left \Vert \mathbf{d}_u^t - \mathbb{E}_{\pmb{\zeta^t}} \left[\mathbf{d}_u^t\right] \right \Vert^2 \right] + \left(\mathbb{E}_{\pmb{\zeta}^t} \left[\mathbf{d}_u^t\right] \right)^2 \right) + \nonumber\\
    &\Squad 2 \mathbb{E}_{\pmb{\zeta}^t} \left[ \left\Vert \sum_{u=0}^{U-1} \alpha_u \mathfrak{q}_u^t \mathbf{d}_u^t - \mathbb{E}_{\pmb{\zeta}^t} \left[ \sum_{u=0}^{U-1} \alpha_u \mathfrak{q}_u^t \mathbf{d}_u^t \right] \right\Vert^2 \right] + 2 \left(\mathbb{E}_{\pmb{\zeta}^t} \left[ \sum_{u=0}^{U-1} \alpha_u \mathfrak{q}_u^t \mathbf{d}_u^t \right]\right)^2 + \nonumber\\
    &\Squad \mathbb{E}_{\pmb{\zeta}^t} \left\{ 2\mathbb{E}_{Q|\pmb{\zeta}^t} \left[ \left\Vert \sum_{u=0}^{U-1} \alpha_u (1 - \mathfrak{q}_u^t) Q\left( \mathbf{d}_u^t \right) - \mathbb{E}_{Q|\pmb{\zeta}^t} \left[\sum_{u=0}^{U-1} \alpha_u (1 - \mathfrak{q}_u^t) Q\left( \mathbf{d}_u^t \right) \right] \right\Vert^2 \right] + 2\left(\mathbb{E}_{Q|\pmb{\zeta}^t}\left[ \sum_{u=0}^{U-1} \alpha_u (1 - \mathfrak{q}_u^t) Q\left( \mathbf{d}_u^t \right) \right] \right)^2 \right\} \Bigg] \nonumber\\
    &\overset{(e)}{=}\frac{\beta (\eta^t)^2} {2} \Bigg[ q \sum_{u=0}^{U-1} \alpha_u^2 \mathfrak{q}_u^t (1-\mathfrak{q}_u^t) \left( \mathbb{E}_{\pmb{\zeta}^t} \left[\left \Vert \sum_{\tau=0}^{\kappa-1} \bar{g}_u \left(\bar{\mathbf{w}}_u^{t,\tau} | \mathcal{D}_u^t \right) - \mathbb{E}_{\pmb{\zeta^t}} \left[\sum_{\tau=0}^{\kappa-1} \bar{g}_u \left(\bar{\mathbf{w}}_u^{t,\tau} | \mathcal{D}_u^t \right)\right] \right \Vert^2 \right] + \left(\mathbb{E}_{\pmb{\zeta}^t} \left[\sum_{\tau=0}^{\kappa-1} \bar{g}_u \left(\bar{\mathbf{w}}_u^{t,\tau} | \mathcal{D}_u^t \right) \right] \right)^2 \right) + \nonumber\\
    &\Squad 2 \mathbb{E}_{\pmb{\zeta}^t} \left[\left\Vert \sum_{u=0}^{U-1} \alpha_u \mathfrak{q}_u^t \left[ \sum_{\tau=0}^{\kappa-1} \bar{g}_u \left(\bar{\mathbf{w}}_u^{t,\tau} | \mathcal{D}_u^t \right) - \mathbb{E}_{\pmb{\zeta}^t} \left[  \sum_{\tau=0}^{\kappa-1} \bar{g}_u \left(\bar{\mathbf{w}}_u^{t,\tau} | \mathcal{D}_u^t \right) \right] \right] \right\Vert^2 \right] + 2 \left(\mathbb{E}_{\pmb{\zeta}^t} \left[ \sum_{u=0}^{U-1} \alpha_u \mathfrak{q}_u^t \sum_{\tau=0}^{\kappa-1} \bar{g}_u \left(\bar{\mathbf{w}}_u^{t,\tau} | \mathcal{D}_u^t \right) \right]\right)^2 + \nonumber\\
    &\Squad \mathbb{E}_{\pmb{\zeta}^t} \left[ 2 \sum_{u=0}^{U-1} \alpha_u^2 (1 - \mathfrak{q}_u^t)^2 \mathbb{E}_{Q|\pmb{\zeta}^t} \left[ \left\Vert  Q \left( \mathbf{d}_u^t \right) -  \mathbf{d}_u^t \right\Vert^2 \right] + 2\left\Vert \sum_{u=0}^{U-1} \alpha_u (1 - \mathfrak{q}_u^t) \mathbf{d}_u^t \right\Vert^2 \right] \Bigg] \nonumber\\
    &\leq \frac{\beta (\eta^t)^2} {2} \Bigg\{ q \sum_{u=0}^{U-1} \alpha_u^2 \mathfrak{q}_u^t (1-\mathfrak{q}_u^t) \left( \sum_{\tau=0}^{\kappa-1} \mathbb{E}_{\pmb{\zeta}^t} \left[\left \Vert \bar{g}_u \left(\bar{\mathbf{w}}_u^{t,\tau} | \mathcal{D}_u^t \right) - \bar{\nabla} f_u \left(\bar{\mathbf{w}}_u^{t,\tau} | \mathcal{D}_u^t \right) \right \Vert^2 \right] + \left\Vert \sum_{\tau=0}^{\kappa-1} \bar{\nabla} f_u \left(\bar{\mathbf{w}}_u^{t,\tau} | \mathcal{D}_u^t \right) \right\Vert^2 \right) + \nonumber\\
    &\Squad 2 \sum_{u=0}^{U-1} \alpha_u^2 \left(\mathfrak{q}_u^t\right)^2 \sum_{\tau=0}^{\kappa-1} \mathbb{E}_{\pmb{\zeta}^t} \left[ \left\Vert \bar{g}_u \left(\bar{\mathbf{w}}_u^{t,\tau} | \mathcal{D}_u^t \right) - \bar{\nabla} f_u \left(\bar{\mathbf{w}}_u^{t,\tau} | \mathcal{D}_u^t \right) \right\Vert^2 \right] + 2 \left\Vert  \sum_{u=0}^{U-1} \alpha_u \mathfrak{q}_u^t \sum_{\tau=0}^{\kappa-1} \bar{\nabla} f_u \left(\bar{\mathbf{w}}_u^{t,\tau} | \mathcal{D}_u^t \right) \right\Vert^2 + \nonumber\\
    &\Squad 2 \sum_{u=0}^{U-1} \alpha_u^2 (1 - \mathfrak{q}_u^t)^2 q ~ \mathbb{E}_{\pmb{\zeta}^t} \left[\left\Vert \mathbf{d}_u^t \right\Vert^2 \right] + 2 \mathbb{E}_{\pmb{\zeta}^t} \left[\left\Vert \sum_{u=0}^{U-1} \alpha_u (1 - \mathfrak{q}_u^t) \mathbf{d}_u^t \right\Vert^2 \right] \Bigg\} \nonumber\\
    &\overset{(f)}{\leq} \frac{\beta (\eta^t)^2} {2} \Bigg\{ q \sum_{u=0}^{U-1} \alpha_u^2 \mathfrak{q}_u^t (1-\mathfrak{q}_u^t) \left[ \sum_{\tau=0}^{\kappa-1} \sigma^2 + \left\Vert \sum_{\tau=0}^{\kappa-1} \bar{\nabla} f_u \left(\bar{\mathbf{w}}_u^{t,\tau} | \mathcal{D}_u^t \right) \right\Vert^2 \right] +  2 \sum_{u=0}^{U-1} \alpha_u^2 \left(\mathfrak{q}_u^t\right)^2 \sum_{\tau=0}^{\kappa-1} \sigma^2 + 2 \left\Vert  \sum_{u=0}^{U-1} \alpha_u \mathfrak{q}_u^t \sum_{\tau=0}^{\kappa-1} \bar{\nabla} f_u \left(\bar{\mathbf{w}}_u^{t,\tau} | \mathcal{D}_u^t \right) \right\Vert^2 + \nonumber\\
    &\Squad 2 q \sum_{u=0}^{U-1} \alpha_u^2 (1 - \mathfrak{q}_u^t)^2 \left( \sum_{\tau=0}^{\kappa-1} \mathbb{E}_{\pmb{\zeta}^t} \left[\left\Vert \left[\bar{g}_u \left(\bar{\mathbf{w}}_u^{t,\tau} | \mathcal{D}_u^t \right) - \bar{\nabla} f_u \left(\bar{\mathbf{w}}_u^{t,\tau} | \mathcal{D}_u^t \right)\right] \right\Vert^2\right] + \left\Vert \sum_{\tau=0}^{\kappa-1} \bar{\nabla} f_u \left(\bar{\mathbf{w}}_u^{t,\tau} | \mathcal{D}_u^t \right) \right\Vert \right) + \nonumber\\
    &\Squad 2\sum_{u=0}^{U-1} \alpha_u^2 (1 - \mathfrak{q}_u^t)^2 \sum_{\tau=0}^{\kappa-1} \mathbb{E}_{\pmb{\zeta}^t} \left[ \left\Vert \bar{g}_u \left(\bar{\mathbf{w}}_u^{t,\tau} | \mathcal{D}_u^t \right) - \bar{\nabla} f_u \left(\bar{\mathbf{w}}_u^{t,\tau} | \mathcal{D}_u^t \right) \right\Vert^2 \right] + 2\left\Vert \sum_{u=0}^{U-1} \alpha_u (1 - \mathfrak{q}_u^t) \sum_{\tau=0}^{\kappa-1} \bar{\nabla} f_u \left(\bar{\mathbf{w}}_u^{t,\tau} | \mathcal{D}_u^t \right) \right\Vert^2 \Bigg\} \nonumber\\
    &\leq \frac{\beta (\eta^t)^2} {2} \Bigg\{ q \sum_{u=0}^{U-1} \alpha_u^2 \mathfrak{q}_u^t (1-\mathfrak{q}_u^t) \left[ \kappa \sigma^2 + \left\Vert \sum_{\tau=0}^{\kappa-1} \bar{\nabla} f_u \left(\bar{\mathbf{w}}_u^{t,\tau} | \mathcal{D}_u^t \right) \right\Vert^2 \right] + 2 \kappa \sigma^2 \sum_{u=0}^{U-1} \alpha_u^2 \left(\mathfrak{q}_u^t\right)^2 + 2 \left\Vert  \sum_{u=0}^{U-1} \alpha_u \mathfrak{q}_u^t \sum_{\tau=0}^{\kappa-1} \bar{\nabla} f_u \left(\bar{\mathbf{w}}_u^{t,\tau} | \mathcal{D}_u^t \right) \right\Vert^2 + \nonumber\\
    &\Squad 2 q \sum_{u=0}^{U-1} \alpha_u^2 (1 - \mathfrak{q}_u^t)^2 \left[\kappa \sigma^2 + \left\Vert \sum_{\tau=0}^{\kappa-1} \bar{\nabla} f_u \left(\bar{\mathbf{w}}_u^{t,\tau} | \mathcal{D}_u^t \right) \right\Vert \right] + 2 \kappa \sigma^2 \sum_{u=0}^{U-1} \alpha_u^2 (1 - \mathfrak{q}_u^t)^2 + 2\left\Vert \sum_{u=0}^{U-1} \alpha_u (1 - \mathfrak{q}_u^t) \sum_{\tau=0}^{\kappa-1} \bar{\nabla} f_u \left(\bar{\mathbf{w}}_u^{t,\tau} | \mathcal{D}_u^t \right) \right\Vert^2 \Bigg\} \nonumber\\
    &\leq \frac{\beta (\eta^t)^2} {2} \Bigg\{  \kappa \sigma^2 \sum_{u=0}^{U-1} \alpha_u^2 \left[2 + 2q +(4+q)\left(\mathfrak{q}_u^t\right)^2 - (3q+4)\mathfrak{q}_u^t \right] + q \kappa \sum_{u=0}^{U-1} \alpha_u^2 \mathfrak{q}_u^t (1-\mathfrak{q}_u^t) \sum_{\tau=0}^{\kappa-1} \left\Vert \bar{\nabla} f_u \left(\bar{\mathbf{w}}_u^{t,\tau} | \mathcal{D}_u^t \right) \right\Vert^2  + \nonumber\\
    &\Squad 2 \sum_{u=0}^{U-1} \alpha_u \left\Vert \mathfrak{q}_u^t \sum_{\tau=0}^{\kappa-1} \bar{\nabla} f_u \left(\bar{\mathbf{w}}_u^{t,\tau} | \mathcal{D}_u^t \right) \right\Vert^2 + 2 q \kappa \sum_{u=0}^{U-1} \alpha_u^2 (1 - \mathfrak{q}_u^t)^2 \sum_{\tau=0}^{\kappa-1} \left\Vert \bar{\nabla} f_u \left(\bar{\mathbf{w}}_u^{t,\tau} | \mathcal{D}_u^t \right) \right\Vert +  2 \sum_{u=0}^{U-1} \alpha_u \left\Vert (1 - \mathfrak{q}_u^t) \sum_{\tau=0}^{\kappa-1} \bar{\nabla} f_u \left(\bar{\mathbf{w}}_u^{t,\tau} | \mathcal{D}_u^t \right) \right\Vert^2 \Bigg\} \nonumber\\
    &\leq \frac{\beta (\eta^t)^2} {2} \Bigg\{  \kappa \sigma^2 \sum_{u=0}^{U-1} \alpha_u^2 \left[2 + 2q +(4+q)\left(\mathfrak{q}_u^t\right)^2 - (3q+4)\mathfrak{q}_u^t \right] + q \kappa \sum_{u=0}^{U-1} \alpha_u^2 \left[ \mathfrak{q}_u^t (1-\mathfrak{q}_u^t) + (1 - \mathfrak{q}_u^t)^2 \right] \sum_{\tau=0}^{\kappa-1} \left\Vert \bar{\nabla} f_u \left(\bar{\mathbf{w}}_u^{t,\tau} | \mathcal{D}_u^t \right) \right\Vert^2  + \nonumber\\
    &\Squad 2 \kappa \sum_{u=0}^{U-1} \alpha_u \left(\mathfrak{q}_u^t\right)^2 \sum_{\tau=0}^{\kappa-1} \left\Vert \bar{\nabla} f_u \left(\bar{\mathbf{w}}_u^{t,\tau} | \mathcal{D}_u^t \right) \right\Vert^2 + 2 \kappa \sum_{u=0}^{U-1} \alpha_u (1 - \mathfrak{q}_u^t)^2 \sum_{\tau=0}^{\kappa-1} \left\Vert \bar{\nabla} f_u \left(\bar{\mathbf{w}}_u^{t,\tau} | \mathcal{D}_u^t \right) \right\Vert^2 \Bigg\} \nonumber\\
    &=\frac{\beta (\eta^t)^2} {2} \Bigg\{  \kappa \sigma^2 \sum_{u=0}^{U-1} \alpha_u^2 \left[2 + 2q +(4+q)\left(\mathfrak{q}_u^t\right)^2 - (3q+4)\mathfrak{q}_u^t \right] + q \kappa \sum_{u=0}^{U-1} \alpha_u^2 (1-\mathfrak{q}_u^t)  \sum_{\tau=0}^{\kappa-1} \left\Vert \bar{\nabla} f_u \left(\bar{\mathbf{w}}_u^{t,\tau} | \mathcal{D}_u^t \right) \right\Vert^2  + \nonumber\\
    &\Squad 2 \kappa \sum_{u=0}^{U-1} \alpha_u \left[ 1 - 2\mathfrak{q}_u^t (1 - \mathfrak{q}_u^t)\right] \sum_{\tau=0}^{\kappa-1} \left\Vert \bar{\nabla} f_u \left(\bar{\mathbf{w}}_u^{t,\tau} | \mathcal{D}_u^t \right) \right\Vert^2 \Bigg\} \nonumber\\
    &\overset{(g)}{\leq} \frac{\beta (\eta^t)^2} {2} \Bigg\{  \kappa \sigma^2 \sum_{u=0}^{U-1} \alpha_u^2 \left[2 + 2q +(4+q)\left(\mathfrak{q}_u^t\right)^2 - (3q+4)\mathfrak{q}_u^t \right] + q \kappa \sum_{u=0}^{U-1} \alpha_u^2  \sum_{\tau=0}^{\kappa-1} \left\Vert \bar{\nabla} f_u \left(\bar{\mathbf{w}}_u^{t,\tau} | \mathcal{D}_u^t \right) \right\Vert^2  + 2 \kappa \sum_{u=0}^{U-1} \alpha_u \sum_{\tau=0}^{\kappa-1} \left\Vert \bar{\nabla} f_u \left(\bar{\mathbf{w}}_u^{t,\tau} | \mathcal{D}_u^t \right) \right\Vert^2 \Bigg\} \nonumber\\
    &\overset{(h)}{\leq} \frac{\beta (\eta^t)^2} {2} \Bigg\{  \kappa \sigma^2 \sum_{u=0}^{U-1} \alpha_u^2 \left[2 + 2q +(4+q)\left(\mathfrak{q}_u^t\right)^2 - (3q+4)\mathfrak{q}_u^t \right] + q \kappa \sum_{u=0}^{U-1} \alpha_u  \sum_{\tau=0}^{\kappa-1} \left\Vert \bar{\nabla} f_u \left(\bar{\mathbf{w}}_u^{t,\tau} | \mathcal{D}_u^t \right) \right\Vert^2  + 2 \kappa \sum_{u=0}^{U-1} \alpha_u \sum_{\tau=0}^{\kappa-1} \left\Vert \bar{\nabla} f_u \left(\bar{\mathbf{w}}_u^{t,\tau} | \mathcal{D}_u^t \right) \right\Vert^2 \Bigg\} \nonumber\\
    &=\frac{\beta (\eta^t)^2} {2} \left[  \kappa \sigma^2 \sum_{u=0}^{U-1} \alpha_u^2 \left(2 + 2q +(4+q)\left(\mathfrak{q}_u^t\right)^2 - (3q+4)\mathfrak{q}_u^t \right) + \left(2+q\right) \kappa \sum_{u=0}^{U-1} \alpha_u  \sum_{\tau=0}^{\kappa-1} \left\Vert \bar{\nabla} f_u \left(\bar{\mathbf{w}}_u^{t,\tau} | \mathcal{D}_u^t \right) \right\Vert^2\right],
    \label{third_term_sim}
\end{align}
where $(a)$ comes from the definition of variance. 
In $(b)$, we use the fact that the cross product terms becomes zero when the expectation (w.r.t $\mathbf{\mathfrak{q}}^t$) is taken.
Besides, $(c)$ is true since $\Vert \sum_{i=0}^{I-1} \mathbf{a}_i \Vert^2=\Vert \sum_{i=0}^{I-1} 1 \cdot \mathbf{a}_i \Vert^2 \leq I \sum_{i=0}^{I-1} \Vert \mathbf{a}_i \Vert^2$ from Cauchy-Schwarz inequality, while $(d)$ appears from the bounded variance assumption of the stochastic quantizer in Assumption $4$.
In $(e)$, we use the definition of $\mathbf{d}_u^t$, while $(f)$ is true from the bounded variance assumption of the stochastic gradients. 
Furthermore, in $(g)$, we use the fact that $0 \leq \mathfrak{q}_u^t \leq 1$ implies $(1-\mathfrak{q}_u^t) \leq 1$ and thus, $\frac{1}{2} \leq \left(1 - 2\mathfrak{q}_u^t (1-\mathfrak{q}_u^t)\right) \leq 1$.
Finally, $(h)$ is true due to the fact that $\Vert \mathbf{a} \Vert \geq 0$ for any vector $\mathbf{a}$, $0\leq \alpha_u \leq 1$, and $\sum_{u=0}^{U-1}\alpha_u=1$, which implies $\sum_{u=0}^{U-1} \alpha_u^2 \Vert \mathbf{a} \Vert^2 \leq \sum_{u=0}^{U-1} \alpha_u \Vert \mathbf{a} \Vert^2$.

To that end, plugging (\ref{second_term_sim}) and (\ref{third_term_sim}) into 
(\ref{conv_eqn0_1}), we get
\begin{align}
    &\mathbb{E} \left[f \left( \mathbf{w}^{t+1} | \mathcal{D}^{t+1} \right) \right] 
    \leq \mathbb{E} \big[f \left( \mathbf{w}^t|\mathcal{D}^t \right)\big] + \frac{\eta^t \beta^2}{2} \sum_{u=0}^{U-1} \alpha_u \sum_{\tau=0}^{\kappa-1} \left\Vert \mathbf{w}^t - \Bar{\mathbf{w}}_u^{t, \tau} \right\Vert^2 - \frac{\eta^t\kappa}{2} \left\Vert \bar{\nabla} f \left( \mathbf{w}^t|\mathcal{D}^t \right) \right\Vert^2 - \frac{\eta^t}{2} \sum_{u=0}^{U-1} \alpha_u \sum_{\tau=0}^{\kappa-1} \left\Vert \bar{\nabla} f_u \left( \Bar{\mathbf{w}}_u^{t, \tau} | \mathcal{D}_u^t \right) \right\Vert^2 + \nonumber\\
    & \Squad \frac{\beta (\eta^t)^2} {2} \left[  \kappa \sigma^2 \sum_{u=0}^{U-1} \alpha_u^2 \left(2 + 2q +(4+q)\left(\mathfrak{q}_u^t\right)^2 - (3q+4)\mathfrak{q}_u^t \right) + \left(2+q\right) \kappa \sum_{u=0}^{U-1} \alpha_u  \sum_{\tau=0}^{\kappa-1} \left\Vert \bar{\nabla} f_u \left(\bar{\mathbf{w}}_u^{t,\tau} | \mathcal{D}_u^t \right) \right\Vert^2\right] \nonumber\\
    &= \mathbb{E} \big[f \left( \mathbf{w}^t|\mathcal{D}^t \right)\big] + \frac{\beta \kappa (\eta^t)^2 \sigma^2} {2} \sum_{u=0}^{U-1} \alpha_u^2 \left(2 + 2q +(4+q)\left(\mathfrak{q}_u^t\right)^2 - (3q+4)\mathfrak{q}_u^t \right) + \frac{\eta^t \beta^2}{2} \sum_{u=0}^{U-1} \alpha_u \sum_{\tau=0}^{\kappa-1} \left\Vert \mathbf{w}^t - \Bar{\mathbf{w}}_u^{t, \tau} \right\Vert^2 - \frac{\eta^t\kappa}{2} \left\Vert \bar{\nabla} f \left( \mathbf{w}^t|\mathcal{D}^t \right) \right\Vert^2 \nonumber \\
    &\Squad - \frac{\eta^t}{2} \left(1 - \beta \kappa \eta^t(2+q) \right)\sum_{u=0}^{U-1} \alpha_u \sum_{\tau=0}^{\kappa-1} \left\Vert \bar{\nabla} f_u \left( \Bar{\mathbf{w}}_u^{t, \tau} | \mathcal{D}_u^t \right) \right\Vert^2.
    \label{conv_eqn1_0}
\end{align}
When $\eta^t\leq \frac{1}{\beta \kappa(2+q)}$, we have $0 \leq (1 - \beta \eta^t \kappa (2+q)) \leq 1$.
As such, we drop the last term in (\ref{conv_eqn1_0}).
Then, after rearranging the terms in (\ref{conv_eqn1_0}), dividing both sides by $\frac{\eta^t\kappa}{2}$ and taking expectations on both sides, we get the following
\begin{align}
    \mathbb{E} \left[ \left\Vert \bar{\nabla} f \left( \mathbf{w}^{t} | \mathcal{D}^t \right) \right\Vert^2  \right]
    &\leq \frac{2 \left( \mathbb{E} \left[ f \left(\mathbf{w}^{t} | \mathcal{D}^t \right) \right] - \mathbb{E} \left[ f \left( \mathbf{w}^{t+1} | \mathcal{D}^{t+1} \right) \right] \right)} {\eta^t\kappa} + \beta \kappa \eta^t \sigma^2 \sum_{u=0}^{U-1} \alpha_u^2 \left(2 + 2q +(4+q)\left(\mathfrak{q}_u^t\right)^2 - (3q+4)\mathfrak{q}_u^t \right) + \nonumber \\
    & \frac{\beta^2}{\kappa} \sum_{u=0}^{U-1} \alpha_u\sum_{\tau=0}^{\kappa-1} \underbrace{\mathbb{E} \left[\left \Vert \mathbf{w}^{t} - \Bar{\mathbf{w}}_u^{t, \tau} \right\Vert^2 \right]}_{\mathrm{T}_3}. \label{conv_eqn1_2}
\end{align}

Now, we simplify the $\mathrm{T}_3$ term as follows
\begin{align}
\label{t3termSimplified}
    \mathrm{T}_3 &= \mathbb{E} \left[\left \Vert \mathbf{w}^t -  \Bar{\mathbf{w}}_u^{t, \tau} \right\Vert^2 \right] \nonumber\\
    &=\mathbb{E} \left[\left \Vert \mathbf{w}^t -\mathbf{w}_u^t + \mathbf{w}_u^t -  \Bar{\mathbf{w}}_u^{t, \tau} \right\Vert^2 \right] \nonumber\\
    &\overset{(a)}{=} \mathbb{E} \left[\left \Vert \mathbf{w}_u^t -  \Bar{\mathbf{w}}_u^{t, \tau} \right\Vert^2 \right] \nonumber\\
    &= \mathbb{E} \left[\left \Vert \mathbf{w}_u^t - \Bar{\mathbf{w}}_u^{t, 0} + \Tilde{\eta}^t\sum_{\tau'=0}^{\tau-1} \bar{g}_u \left( \Bar{\mathbf{w}}_u^{t, \tau'} | \mathcal{D}_u^t \right) \right\Vert^2 \right] \nonumber\\
    &\overset{(b)}{\leq} 2 \mathbb{E} \left[\left \Vert \mathbf{w}_u^t - \Bar{\mathbf{w}}_u^{t, 0} \right\Vert^2 \right] + 2 \mathbb{E} \left[\left \Vert \Tilde{\eta}^t\sum_{\tau'=0}^{\tau-1} \bar{g}_u \left( \Bar{\mathbf{w}}_u^{t, \tau'} | \mathcal{D}_u^t \right)\right\Vert^2 \right] \nonumber\\
    &\overset{(c)}{=} 2 \mathbb{E} \left[\left \Vert \mathbf{w}^t_u - \Bar{\mathbf{w}}_u^{t, 0} \right\Vert^2 \right] + 2 \left(\Tilde{\eta}^t\right)^2 \mathbb{E} \left[\left \Vert \sum_{\tau'=0}^{\tau-1} \left\{ \bar{g}_u \left( \Bar{\mathbf{w}}_u^{t, \tau'} | \mathcal{D}_u^t \right) - \mathbb{E}\left[ \bar{g}_u \left( \Bar{\mathbf{w}}_u^{t, \tau'} | \mathcal{D}_u^t \right) \right] \right\} \right\Vert^2 \right] + 2(\Tilde{\eta}^t)^2 \left( \mathbb{E} \left[ \sum_{\tau'=0}^{\tau-1} \bar{g}_u \left( \Bar{\mathbf{w}}_u^{t, \tau'} | \mathcal{D}_u^t \right) \right] \right)^2 \nonumber\\
    &\overset{(d)}{=} 2 \mathbb{E} \left[\left \Vert \mathbf{w}^t_u - \Bar{\mathbf{w}}_u^{t, 0} \right\Vert^2 \right] + 2 \left(\Tilde{\eta}^t\right)^2 \sum_{\tau'=0}^{\tau-1} \mathbb{E} \left[\left \Vert \bar{g}_u \left( \Bar{\mathbf{w}}_u^{t, \tau'} | \mathcal{D}_u^t \right) - \bar{\nabla} f_u \left( \Bar{\mathbf{w}}_u^{t, \tau'} | \mathcal{D}_u^t \right) \right\Vert^2 \right] + 2(\Tilde{\eta}^t)^2 \left\Vert \sum_{\tau'=0}^{\tau-1} \bar{\nabla} f_u \left( \Bar{\mathbf{w}}_u^{t, \tau'} | \mathcal{D}_u^t \right) \right\Vert^2 \nonumber\\
    &\overset{(e)}{\leq} 2 \mathbb{E} \left[\left \Vert \mathbf{w}_u^t - \Bar{\mathbf{w}}_u^{t, 0} \right\Vert^2 \right] + 2 \kappa \left(\Tilde{\eta}^t\right)^2 \sigma^2 + 2(\Tilde{\eta}^t)^2 \left\Vert \sum_{\tau'=0}^{\tau-1} \bar{\nabla} f_u \left( \Bar{\mathbf{w}}_u^{t, \tau'} | \mathcal{D}_u^t \right) \right\Vert^2 \nonumber\\
    &=2 \mathbb{E} \left[\left \Vert \mathbf{w}_u^t - \Bar{\mathbf{w}}_u^{t, 0} \right\Vert^2 \right] + 2 \kappa \left(\Tilde{\eta}^t\right)^2 \sigma^2 + 2(\Tilde{\eta}^t)^2 \Bigg\Vert \sum_{\tau'=0}^{\tau-1} \bigg[ \bar{\nabla} f_u \left( \Bar{\mathbf{w}}_u^{t, \tau'} | \mathcal{D}_u^t \right) - \bar{\nabla} f_u \left( \Bar{\mathbf{w}}_u^{t, \tau'} | \mathcal{D}_u^{t-1} \right) + \bar{\nabla} f_u \left( \Bar{\mathbf{w}}_u^{t, \tau'} | \mathcal{D}_u^{t-1} \right) - \bar{\nabla} f_u \left( \mathbf{w}^{t} | \mathcal{D}_u^{t-1} \right) + \nonumber\\
    &\Bquad \bar{\nabla} f_u \left( \mathbf{w}^{t} | \mathcal{D}_u^{t-1} \right) - \bar{\nabla} f_u \left( \mathbf{w}^{t} | \mathcal{D}_u^{t} \right) + \bar{\nabla} f_u \left( \mathbf{w}^{t} | \mathcal{D}_u^{t} \right) \bigg] \Bigg\Vert^2 \nonumber\\
    &\overset{(f)}{\leq} 2 \mathbb{E} \left[\left \Vert \mathbf{w}_u^t - \Bar{\mathbf{w}}_u^{t, 0} \right\Vert^2 \right] + 2 \kappa \left(\Tilde{\eta}^t\right)^2 \sigma^2 + 8(\Tilde{\eta}^t)^2 \left\Vert \sum_{\tau'=0}^{\tau-1} \left[ \bar{\nabla} f_u \left( \Bar{\mathbf{w}}_u^{t, \tau'} | \mathcal{D}_u^t \right) - \bar{\nabla} f_u \left( \Bar{\mathbf{w}}_u^{t, \tau'} | \mathcal{D}_u^{t-1} \right) \right] \right\Vert^2 + \nonumber\\
    &\Bquad 8(\Tilde{\eta}^t)^2 \left\Vert \sum_{\tau'=0}^{\tau-1} \left[\bar{\nabla} f_u \left( \Bar{\mathbf{w}}_u^{t, \tau'} | \mathcal{D}_u^{t-1} \right) - \bar{\nabla} f_u \left( \mathbf{w}^{t} | \mathcal{D}_u^{t-1} \right) \right]\right\Vert^2 + \nonumber\\
    &\Bquad 8(\Tilde{\eta}^t)^2 \left\Vert \sum_{\tau'=0}^{\tau-1} \left[ \bar{\nabla} f_u \left( \mathbf{w}^{t} | \mathcal{D}_u^{t-1} \right) - \bar{\nabla} f_u \left( \mathbf{w}^{t} | \mathcal{D}_u^{t} \right) \right]\right\Vert^2 + 8(\Tilde{\eta}^t)^2 \left\Vert \sum_{\tau'=0}^{\tau-1} \left[\bar{\nabla} f_u \left( \mathbf{w}^{t} | \mathcal{D}_u^{t} \right) \right] \right\Vert^2 \nonumber\\
    &\overset{(g)}{\leq} 2 \mathbb{E} \left[\left \Vert \mathbf{w}_u^t - \Bar{\mathbf{w}}_u^{t, 0} \right\Vert^2 \right] + 2 \kappa \left(\Tilde{\eta}^t\right)^2 \sigma^2 + 8 \kappa^2 (\Tilde{\eta}^t)^2 \left\Vert  \bar{\nabla} f_u \left( \Bar{\mathbf{w}}_u^{t,\tau} | \mathcal{D}_u^t \right) - \bar{\nabla} f_u \left( \Bar{\mathbf{w}}_u^{t, \tau} | \mathcal{D}_u^{t-1} \right) \right\Vert^2 + \nonumber\\
    &\Bquad 8 \kappa^2 (\Tilde{\eta}^t)^2 \left\Vert \bar{\nabla} f_u \left( \Bar{\mathbf{w}}_u^{t,\tau} | \mathcal{D}_u^{t-1} \right) - \bar{\nabla} f_u \left( \mathbf{w}^{t} | \mathcal{D}_u^{t-1} \right) \right\Vert^2 + \nonumber\\
    &\Bquad 8\kappa^2 (\Tilde{\eta}^t)^2 \left\Vert \bar{\nabla} f_u \left( \mathbf{w}^{t} | \mathcal{D}_u^{t-1} \right) - \bar{\nabla} f_u \left( \mathbf{w}^{t} | \mathcal{D}_u^{t} \right) \right\Vert^2 + 8\kappa^2 (\Tilde{\eta}^t)^2 \left\Vert \bar{\nabla} f_u \left( \mathbf{w}^{t} | \mathcal{D}_u^{t} \right) \right\Vert^2 \nonumber\\
    &\overset{(h)}{\leq} 2 \mathbb{E} \left[\left \Vert \mathbf{w}_u^t - \Bar{\mathbf{w}}_u^{t, 0} \right\Vert^2 \right] + 2 \kappa \left(\Tilde{\eta}^t\right)^2 \sigma^2 + 8 \Phi_u^t \kappa^2 (\Tilde{\eta}^t)^2 +  8 \beta^2 \kappa^2 (\Tilde{\eta}^t)^2 \left\Vert  \Bar{\mathbf{w}}_u^{t,\tau} - \mathbf{w}^{t} \right\Vert^2 + 8 \Phi_u^t \kappa^2 (\Tilde{\eta}^t)^2 + \nonumber\\
    & \Bquad 8 \kappa^2 (\Tilde{\eta}^t)^2 \left[\rho_1 \left\Vert \bar{\nabla} f (\mathbf{w}^t | \mathcal{D}^t \right\Vert^2 + \rho_2 \epsilon_u^t \right],
\end{align}
where $(a)$ stems from the fact that $\mathbf{w}_u^t \gets \mathbf{w}^t$ during the synchronization phase at the starting of each \ac{fl} round, 
$(c)$ from the definition of variance

Taking expectation on both sides of (\ref{t3termSimplified}), and rearranging the terms we get
\begin{align}
    \mathbb{E} \left[\left \Vert \mathbf{w}^t - \Bar{\mathbf{w}}_u^{t, \tau} \right\Vert^2 \right] 
    \leq \frac{2 \mathbb{E} \left[\left \Vert \mathbf{w}_u^t - \Bar{\mathbf{w}}_u^{t, 0} \right\Vert^2 \right] + 2 \kappa \left(\Tilde{\eta}^t\right)^2 \sigma^2 + 8 \Phi_u^t \kappa^2 (\Tilde{\eta}^t)^2 + 8 \Phi_u^t \kappa^2 (\Tilde{\eta}^t)^2 + 8 \kappa^2 (\Tilde{\eta}^t)^2 \left[\rho_1 \left\Vert \bar{\nabla} f (\mathbf{w}^t | \mathcal{D}^t \right\Vert^2 + \rho_2 \epsilon_u^t \right] }{1 - 8 \beta^2 (\Tilde{\eta}^t)^2\kappa^2},
\end{align}
When $\Tilde{\eta}^t < \frac{1}{2\sqrt{2}\beta\kappa}$, we have $0 < (1 - 8\beta^2(\Tilde{\eta}^t)^2\kappa^2) < 1$. 
As such, we write 
\begin{align}
    \mathrm{T}_3 = \mathbb{E} \left[\left \Vert \mathbf{w}^t - \Bar{\mathbf{w}}_u^{t, \tau} \right\Vert^2 \right] 
    \leq 2 \mathbb{E} \left[\left \Vert \mathbf{w}_u^t - \Bar{\mathbf{w}}_u^{t, 0} \right\Vert^2 \right] + 2 \kappa \left(\Tilde{\eta}^t\right)^2 \sigma^2 + 16 \Phi_u^t \kappa^2 (\Tilde{\eta}^t)^2 + 8 \kappa^2 (\Tilde{\eta}^t)^2 \rho_1 \mathbb{E} \left[\left\Vert \bar{\nabla} f (\mathbf{w}^t | \mathcal{D}^t \right\Vert^2\right] + 8 \kappa^2 (\Tilde{\eta}^t)^2 \rho_2 \epsilon_u^t.
\end{align}

Finally, plugging $\mathrm{T}_3$ into (\ref{conv_eqn1_2}), we get
\begin{align}
    &\mathbb{E} \left[ \left\Vert \bar{\nabla} f \left( \mathbf{w}^{t} | \mathcal{D}^t \right) \right\Vert^2  \right]
    \leq \frac{2 \left( \mathbb{E} \left[ f \left(\mathbf{w}^{t} | \mathcal{D}^t \right) \right] - \mathbb{E} \left[ f \left( \mathbf{w}^{t+1} | \mathcal{D}^{t+1} \right) \right] \right)} {\eta^t\kappa} + \beta \kappa \eta^t \sigma^2 \sum_{u=0}^{U-1} \alpha_u^2 \left(2 + 2q +(4+q)\left(\mathfrak{q}_u^t\right)^2 - (3q+4)\mathfrak{q}_u^t \right) + \nonumber\\
    &\Squad \frac{\beta^2}{\kappa} \sum_{u=0}^{U-1} \alpha_u\sum_{\tau=0}^{\kappa-1} \bigg\{ 2 \mathbb{E} \left[\left \Vert \mathbf{w}_u^t - \Bar{\mathbf{w}}_u^{t, 0} \right\Vert^2 \right] + 2 \kappa \left(\Tilde{\eta}^t\right)^2 \sigma^2 + 16 \Phi_u^t \kappa^2 (\Tilde{\eta}^t)^2 + 8 \kappa^2 (\Tilde{\eta}^t)^2 \rho_1 \mathbb{E} \left[\left\Vert \bar{\nabla} f (\mathbf{w}^t | \mathcal{D}^t \right\Vert^2\right] + 8 \kappa^2 (\Tilde{\eta}^t)^2 \rho_2 \epsilon_u^t \bigg\} \nonumber\\
    &= \frac{2 \left( \mathbb{E} \left[ f \left(\mathbf{w}^{t} | \mathcal{D}^t \right) \right] - \mathbb{E} \left[ f \left( \mathbf{w}^{t+1} | \mathcal{D}^{t+1} \right) \right] \right)} {\eta^t\kappa} + \beta \kappa \eta^t \sigma^2 \sum_{u=0}^{U-1} \alpha_u^2 \left(2 + 2q +(4+q)\left(\mathfrak{q}_u^t\right)^2 - (3q+4)\mathfrak{q}_u^t \right) + 2 \kappa \beta^2 \left(\Tilde{\eta}^t\right)^2 \sigma^2 + \nonumber\\
    &\Squad 2 \beta^2 \sum_{u=0}^{U-1} \alpha_u \mathbb{E} \left[\left \Vert \mathbf{w}_u^t - \Bar{\mathbf{w}}_u^{t, 0} \right\Vert^2 \right] + 16 \beta^2 \kappa^2 (\Tilde{\eta}^t)^2 \sum_{u=0}^{U-1} \alpha_u \Phi_u^t + 8 \beta^2 \kappa^2 (\Tilde{\eta}^t)^2 \rho_1 \mathbb{E} \left[\left\Vert \bar{\nabla} f (\mathbf{w}^t | \mathcal{D}^t \right\Vert^2\right] + 8 \beta^2 \kappa^2 (\Tilde{\eta}^t)^2 \rho_2 \sum_{u=0}^{U-1} \alpha_u \epsilon_u^t \nonumber\\
    &= \frac{2 \left( \mathbb{E} \left[ f \left(\mathbf{w}^{t} | \mathcal{D}^t \right) \right] - \mathbb{E} \left[ f \left( \mathbf{w}^{t+1} | \mathcal{D}^{t+1} \right) \right] \right)} {\eta^t\kappa} + \beta \sigma^2 \left( \kappa \eta^t \sum_{u=0}^{U-1} \alpha_u^2 \left(2 + 2q +(4+q)\left(\mathfrak{q}_u^t\right)^2 - (3q+4)\mathfrak{q}_u^t \right) + 2 \beta \kappa (\Tilde{\eta}^t)^2 \right) + \nonumber\\
    &\Squad 2 \beta^2 \sum_{u=0}^{U-1} \alpha_u \mathbb{E} \left[\left \Vert \mathbf{w}_u^t - \Bar{\mathbf{w}}_u^{t, 0} \right\Vert^2 \right] + 16 \beta^2 \kappa^2 (\Tilde{\eta}^t)^2 \sum_{u=0}^{U-1} \alpha_u \Phi_u^t + 8 \beta^2 \kappa^2 (\Tilde{\eta}^t)^2 \rho_2 \sum_{u=0}^{U-1} \alpha_u \epsilon_u^t + 8 \rho_1 \beta^2 \kappa^2 (\Tilde{\eta}^t)^2  \mathbb{E} \left[\left\Vert \bar{\nabla} f (\mathbf{w}^t | \mathcal{D}^t \right\Vert^2\right].
\end{align}

Rearranging the terms, we write
\begin{align}
    & \mathbb{E} \left[\left\Vert \bar{\nabla} f \left( \mathbf{w}^{t} | \mathcal{D}^t \right) \right\Vert^2 \right] 
    \leq \Bigg[ \frac{2 \left( \mathbb{E} \left[ f \left(\mathbf{w}^{t} | \mathcal{D}^t \right) \right] - \mathbb{E} \left[ f \left( \mathbf{w}^{t+1} | \mathcal{D}^{t+1} \right) \right] \right)} {\eta^t\kappa} + \beta \sigma^2 \left( \kappa \eta^t \sum_{u=0}^{U-1} \alpha_u^2 \mathrm{C}_u\left(q,\mathfrak{q}_u^t \right) + 2 \beta \kappa (\Tilde{\eta}^t)^2 \right) + \nonumber\\
    &\Squad 2 \beta^2 \sum_{u=0}^{U-1} \alpha_u \mathbb{E} \left[\left \Vert \mathbf{w}_u^t - \Bar{\mathbf{w}}_u^{t, 0} \right\Vert^2 \right] + 16 \beta^2 \kappa^2 (\Tilde{\eta}^t)^2 \sum_{u=0}^{U-1} \alpha_u \Phi_u^t + 8 \beta^2 \kappa^2 (\Tilde{\eta}^t)^2 \rho_2 \sum_{u=0}^{U-1} \alpha_u \epsilon_u^t \Bigg] \Big/ \big[1 - 8 \rho_1 \beta^2 (\Tilde{\eta}^t)^2\kappa^2\big]. \label{convBoundAvgOverTime_Eq2}
\end{align}
where $\mathrm{C}_u \left(q,\mathfrak{q}_u^t \right) \coloneqq \left(2 + 2q +(4+q)\left(\mathfrak{q}_u^t\right)^2 - (3q+4)\mathfrak{q}_u^t \right)$.

Similar to our previous assumption, if $\Tilde{\eta}^t< \frac{1}{2\sqrt{2\rho_1}\beta\kappa}$, we have $0 < \left( 1 - 8\rho_1\beta^2(\Tilde{\eta}^t)^2\kappa^2 \right) < 1$.
Therefore, we simplify (\ref{convBoundAvgOverTime_Eq2}) as
\begin{align}
    \mathbb{E} \left[\left\Vert \bar{\nabla} f \left( \mathbf{w}^{t} | \mathcal{D}^t \right) \right\Vert^2 \right] 
    &\leq \Bigg[ \frac{2 \left( \mathbb{E} \left[ f \left(\mathbf{w}^{t} | \mathcal{D}^t \right) \right] - \mathbb{E} \left[ f \left( \mathbf{w}^{t+1} | \mathcal{D}^{t+1} \right) \right] \right)} {\eta^t\kappa} + \beta \sigma^2 \left( \kappa \eta^t \sum_{u=0}^{U-1} \alpha_u^2 \mathrm{C}_u\left(q,\mathfrak{q}_u^t \right) + 2 \beta \kappa (\Tilde{\eta}^t)^2 \right) + \nonumber\\
    &~ 2 \beta^2 \sum_{u=0}^{U-1} \alpha_u \mathbb{E} \left[\left \Vert \mathbf{w}_u^t - \Bar{\mathbf{w}}_u^{t, 0} \right\Vert^2 \right] + 16 \beta^2 \kappa^2 (\Tilde{\eta}^t)^2 \sum_{u=0}^{U-1} \alpha_u \Phi_u^t + 8 \beta^2 \kappa^2 (\Tilde{\eta}^t)^2 \rho_2 \sum_{u=0}^{U-1} \alpha_u \epsilon_u^t \Bigg]. 
\end{align}

Finally, averaging over time results in the following
\begin{align}
    &\frac{1}{T}\sum_{t=0}^{T-1}  \mathbb{E} \left[\left\Vert \bar{\nabla} f \left( \mathbf{w}^{t} | \mathcal{D}^t \right) \right\Vert^2 \right]  
    \leq \frac{2}{\kappa T} \sum_{t=0}^{T-1} \frac{\left( \mathbb{E} \left[ f \left(\mathbf{w}^{t} | \mathcal{D}^t \right) \right] - \mathbb{E} \left[ f \left( \mathbf{w}^{t+1} | \mathcal{D}^{t+1} \right) \right] \right)} {\eta^t} + \frac{\beta  \sigma^2}{T} \left( \sum_{u=0}^{U-1} \alpha_u^2 \sum_{t=0}^{T-1} \eta^t  \mathrm{C}_u\left(q, \mathfrak{q}_u^t\right) + 2 \beta \kappa \sum_{t=0}^{T-1} (\Tilde{\eta}^t)^2 \right) + \nonumber\\
    &\Mquad \frac{16 \beta^2 \kappa^2}{T} \sum_{t=0}^{T-1} \left(\Tilde{\eta}^t\right)^2 \sum_{u=0}^{U-1} \alpha_u \Phi_u^t + \frac{8 \beta^2 \kappa^2 \rho_2}{T} \sum_{t=0}^{T-1} \left(\Tilde{\eta}^t\right)^2 \sum_{u=0}^{U-1} \alpha_u \epsilon_u^t + \frac{2 \beta^2}{T} \sum_{t=0}^{T-1} \sum_{u=0}^{U-1} \alpha_u \mathbb{E} \left[\left \Vert \mathbf{w}_u^t - \Bar{\mathbf{w}}_u^{t, 0} \right\Vert^2 \right], 
\end{align}
which concludes the proof.
\end{proof}

\end{appendices}

\end{document}